\documentclass[journal]{IEEEtran}
\usepackage{amsfonts}
\usepackage{cite,graphicx,amsmath,amsthm}
\usepackage{fancyhdr}
\usepackage{dsfont}
\usepackage{array,color}
\usepackage{bm}
\usepackage{float}
\usepackage{algpseudocode}
\usepackage{multirow}
\usepackage{booktabs}
\usepackage{makecell}
\usepackage{hyperref}
\allowdisplaybreaks[4]
\usepackage{amsmath,amssymb, graphicx, epstopdf,cite,enumerate,booktabs, setspace, float,stfloats,bm, multirow, lscape, caption, subcaption, graphbox, soul, color, xcolor, hyperref}
\usepackage{diagbox}
\hypersetup{hidelinks}
\usepackage{cite, amsmath,amssymb, bm}
\usepackage{bbding}
\usepackage{amsthm}
\usepackage[normalem]{ulem}
\usepackage[linesnumbered,ruled]{algorithm2e}
\usepackage{flushend}

\newtheorem{lemma}{Lemma}

\newtheorem{proposition}{Proposition}

\newtheorem{corollary}{Corollary}

\newtheorem{property}{Property}

\newtheorem{remark}{Remark}

\newtheorem{claim}{Claim}

\definecolor{purple2}{HTML}{EEE6FD}
\usepackage[table]{xcolor}
\definecolor{tablemaxbg}{HTML}{EEE6FD}
\newcommand{\maxval}[1]{{\setlength{\fboxsep}{0.6pt}\colorbox{tablemaxbg}{#1}}}

\begin{document}

\title{
Memory in the Sky: Low-Altitude Question Answering with Multi-Agent Memory Aggregation
}

\author{
Chengyang Li, Yujie Wan, Shuai Wang, Kejiang Ye, Weijie Yuan, Boyu Zhou, Yik-Chung Wu,\\Chengzhong Xu,~\emph{Fellow, IEEE}, and Huseyin Arslan,~\emph{Fellow, IEEE}
\vspace{-0.1in}

\thanks{A preliminary version of this work has been accepted by IEEE Global Communications Conference (GLOBECOM) 2026, Macau SAR, China \cite{li2026memory}. 
}
\thanks{
Chengyang Li and Yik-Chung Wu are with The University of Hong Kong, Hong Kong SAR, China.

Shuai Wang and Kejiang Ye are with the Shenzhen Institutes of Advanced Technology, Chinese Academy of Sciences, Shenzhen, China.

Yujie Wan, Weijie Yuan, and Boyu Zhou are with the Southern University of Science and Technology, Shenzhen, China.

Chengzhong Xu is with the University of Macau, Macau SAR, China.

Huseyin Arslan is with the Istanbul Medipol University, Istanbul, Turkey.

Corresponding author: Shuai Wang ({\tt\footnotesize s.wang@siat.ac.cn}). 
}
}
\maketitle

\begin{abstract}
This paper studies low-altitude question answering (LAQA), in which distributed unmanned aerial vehicle (UAV) memories are aggregated at a ground server to answer questions about observations over a long horizon.
Unlike conventional resource allocation based on sensing, communication, control, or computation metrics, LAQA requires an explicit measure of memory value.
We propose a generative adversarial exam (GAE) that uses \emph{forward simulation to evaluate memory retrieval} and \emph{exam scores to quantify memory quality}.
This enables the downstream QA value of candidate memories to be measured and optimized without accessing the internal mechanisms of the black-box captioning, retrieval, and reasoning pipeline.
Building on this metric, we develop a memory-centric (MemCen) framework that jointly selects UAVs and allocates transmit power to maximize memory quality under communication constraints.
In the noise-limited regime, we derive a QoM-aware capped water-filling law that explicitly connects task utility with physical-layer power allocation.
We further develop penalty successive optimization (PSO) and learning to memorize (L2M) solvers.
MemCen achieves QA accuracies of 92.4\% and 84.0\% in CARLA Town04 and Town05 under static and dynamic communication conditions, respectively.
In real-world experiments, MemCen achieves 88.5\% QA accuracy on the panoramic multi-agent system (PMAS) benchmark.
Finally, UAV-to-robot-dog demonstrations further validate the practical utility of the acquired memories for environmental understanding and navigation.
\end{abstract}

\begin{IEEEkeywords}
Low-altitude wireless network, multi-agent memory, resource allocation, panoramic multi-agent system.
\end{IEEEkeywords}

\section{Introduction}

A low-altitude wireless network (LAWN) integrates sensing, communication, control, and computing across aerial and terrestrial nodes to support diverse functions in mission-critical scenarios \cite{li2026memory,jiang2025integrated}.
In a LAWN, unmanned aerial vehicles (UAVs), or drones, operate over broad areas and continuously observe buildings, roads, objects, and dynamic events while performing missions such as delivery.
As UAVs may operate for extended periods, e.g., several hours, their accumulated observations provide a rich source of historical information that can be queried over long time horizons.
For instance, a user may ask where a particular object was previously observed \cite{anwar2025remembr}.
This motivates low-altitude question answering (LAQA) \cite{wang2025llm,liu2025goal,zhang2023multistep}, where local memories collected by distributed UAVs are aggregated into a global memory at a ground server to support agentic large language model (LLM) inference. 

\subsection{Related Work}

Conventional QA methods are often constrained by short observation horizons, e.g., several minutes \cite{sermanet2024robovqa}, high token costs \cite{xu2024mobility}, or the lack of explicit spatial information in natural-language outputs, such as returning ``beside the building'' rather than a precise position $[x,y,z]$ \cite{das2018embodied}.
Emerging retrieval-augmented memory methods \cite{anwar2025remembr} employ LLM agents to retrieve task-relevant memories over long horizons through function calls conditioned on textual, spatial, and temporal queries.
However, the amount of memory required grows with spatial and temporal coverage, making standalone ground-robot solutions increasingly inefficient for large-scale outdoor scene understanding.
To address this limitation, LAQA constructs memory over a LAWN, leveraging broader fields of view, higher mobility, and multiple distributed agents to substantially extend memory coverage \cite{wang2025llm,liu2025goal,zhang2023multistep}.

Existing LAWN systems adopt sensing coverage \cite{liu2024coverage,cheng2025development}, communication throughput \cite{ye2025integrated}, control safety \cite{zhang2023multistep,jin2025co}, computation efficiency \cite{wang2026low}, or their combinations \cite{jiang2025integrated} as design objectives. LAQA instead explicitly maximizes memory quality. 
This objective distinguishes LAQA resource allocation from traditional LAWN schemes \cite{liu2024coverage,cheng2025development,ye2025integrated,zhang2023multistep,jin2025co,wang2026low,jiang2025integrated} that overlook differences in how UAV memories contribute to QA accuracy. 
Thus, the methods in 
\cite{liu2024coverage,cheng2025development,ye2025integrated,zhang2023multistep,jin2025co,wang2026low,jiang2025integrated} exhibit degraded QA accuracy for diverse questions spanning large spatial and temporal ranges. 

To prioritize transmissions based on data value, semantic communication (SemCom) \cite{liu2025intelligent,yan2022resource} has been explored. These systems reduce information redundancy while preserving semantic similarity (e.g., cosine similarity between BERT embeddings) between the transmitter and receiver. However, semantic similarity may overlook subtle but critical differences in long memory contexts.
It is an intermediate rather than end-to-end task metric for QA. 
Task-oriented communication \cite{wang2020machine,wang2022edge,shi2023task,wen2023task} enables end-to-end optimization of inference performance within integrated sensing, communication, and computation systems. 
If the tasks are closed-set with well-defined object classes and labels \cite{wang2020machine,wang2022edge,wen2023task,shi2023task}, the objective function can be explicitly formulated. 
In LAQA, however, the task is open-set, and the memory must be constructed without prior knowledge of the questions. This uncertainty complicates end-to-end optimization and leaves the design of a suitable memory-oriented objective for LAQA unresolved. 

\subsection{Contributions of This Work}

We introduce a memory-oriented objective based on the observation that, although specific user questions are unbounded and unpredictable, their \emph{patterns and templates are finite and predictable}.
Accordingly, we employ an LLM as a \emph{proxy questioner} to perform forward simulation of potential QA processes.
Building on this principle, we develop a generative adversarial exam (GAE) pipeline that evaluates quality-of-memory (QoM) by generating exams whose difficulty reflects the informativeness of the underlying memory.
A UAV memory that induces fewer correct answers corresponds to a more challenging exam and thus carries higher information value, whereas the converse indicates lower value.
Therefore, the number of unanswered questions provides an explicit and task-oriented measure of memory utility.
This evaluation makes the downstream QA value of candidate memories measurable and optimizable, even when the captioning, retrieval, and reasoning pipeline is treated as a black box.

We integrate GAE with wireless communication conditions into a unified formulation for end-to-end memory optimization, termed memory-centric (MemCen) resource allocation.
For MemCen, we derive a QoM-aware capped water-filling law that characterizes how memory quality governs UAV activation, transmit-power allocation, and power saturation.
We further develop penalty successive optimization (PSO) and learning to memorize (L2M) as practical optimization and low-latency inference methods, respectively.
To facilitate the design and evaluation of LAQA systems, we develop OpenMAMS, an open-source multi-agent memory system and benchmarking platform.
Finally, we validate LAQA through physical multi-UAV experiments and an embodied memory-reuse demonstration.

Our contributions are summarized as follows:
\begin{itemize}
\item We characterize the relationship between a candidate UAV memory and its utility for future open-set QA. GAE forward-simulates candidate-grounded questions against the current global memory and converts the resulting knowledge gap into a computable QoM score. This score exposes task-level memory utility to the resource optimizer without requiring a differentiable captioning, retrieval, or reasoning pipeline.

\item We formulate MemCen as a joint memory-selection and power-allocation problem that integrates memory utility, payload size, channel gain, interference, and transmit power constraints. We derive a QoM-aware capped water-filling law that characterizes how task-weighted QoM determines UAV activation priority and power allocation until memory transmission saturates. We further develop penalty successive optimization (PSO) and learning to memorize (L2M) as practical optimization and low-latency inference methods, respectively.

\item We implement MemCen in OpenMAMS and evaluate it in Town04 and Town05 under heterogeneous UAVs, non-uniform memory payloads, mobility, and building blockage. We further demonstrate UAV-to-ground-robot memory reuse, showing how aerial memories support downstream environmental understanding and navigation.
\end{itemize}

\subsection{Outline and Notations}

The remainder of this paper is organized as follows.
Section \ref{section2} states the LAQA problem. 
Section \ref{section3} presents the GAE model.
Section~\ref{section4} presents the cross-layer formulation, its
QoM-aware allocation structure, and practical PSO/L2M solvers.
Subsequently, Section~\ref{section5} presents the simulation results.
Section~\ref{section6} reports the real-world experiments.
Finally, Section~\ref{section7} concludes this work.

Italic, lowercase bold, uppercase bold, and calligraphic letters denote scalars, vectors, matrices, and sets, respectively.
The operators $(\cdot)^T$, $(\cdot)^H$, $(\cdot)^{-1}$ take the transpose, conjugate transpose, and inverse of a matrix, respectively.
$\nabla f$ represents the gradient of a function $f$.
$\mathcal{CN}(0,1)$ is the standard circularly symmetric complex Gaussian distribution.
$\mathbb{R}$ and $\mathbb{C}$ denote the real and complex fields, respectively. 
Finally, $[a_1,a_2,\cdots]^{T}$ represents a column vector, $\left\Vert\cdot\right\Vert_p$ represents the $\ell_p$-norm of a vector, 
and $\mathcal{O}(\cdot)$ means the order of arithmetic operations.

\begin{figure*}[t]
    \centering
    \includegraphics[width=1.0\textwidth]{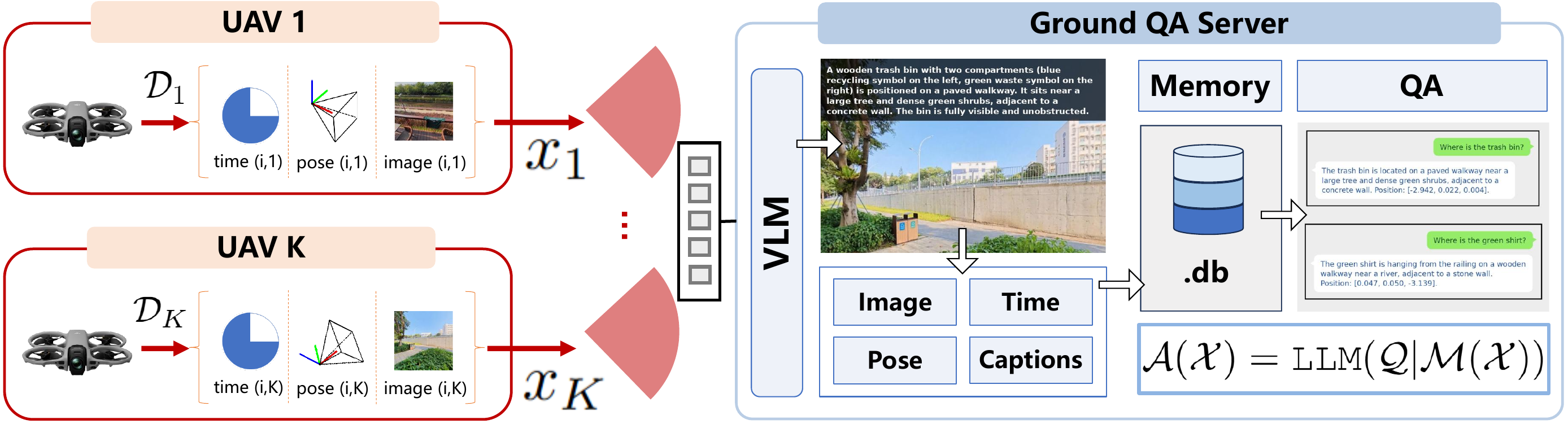}
    \caption{System architecture of LAQA with memory collection and memory-empowered query.}
    \label{fig1}
    \vspace{-0.1in}
\end{figure*}

\section{System Model and Problem Formulation}\label{section2}

We consider the LAQA system illustrated in Fig.~\ref{fig1}, consisting of a ground server and $K$ UAVs.
Let $\mathcal{K}=\{1,\ldots,K\}$ denote the UAV set and $\mathcal{D}=\{\mathcal{D}_k\}_{k\in\mathcal{K}}$ the distributed data collected by the UAVs.
The LAQA system aggregates these data at the ground server to construct a long-horizon global memory $\mathcal{M}$.
Based on $\mathcal{M}$, the system answers a set of spatiotemporal questions $\mathcal{Q}=\{\mathbf{q}_j\}$ and produces the corresponding answers $\mathcal{A}=\{\mathbf{a}_j\}$, where $\mathbf{q}_j$ and $\mathbf{a}_j$ denote the vector representations of the $j$-th question and its corresponding answer, respectively.

\subsection{System Overview}

\subsubsection{Memory Collection}

Each UAV performs simultaneous localization and mapping (SLAM) to obtain a sequence of time-stamped pose-image frames,
$\mathcal{D}_k=\left\{(t_{i,k},\mathbf{s}_{i,k},\mathbf{v}_{i,k})\right\}_{i\in\mathcal{I}_k}$, where $t_{i,k}$ denotes the timestamp, $\mathbf{s}_{i,k}\in\mathbb{R}^{6}$ denotes the 6D pose \cite{anwar2025remembr}, and $\mathbf{v}_{i,k}\in\mathbb{R}^{L\times W\times C}$ denotes the $i$-th image collected by UAV $k$. Here, $L$, $W$, and $C$ denote the image height, width, and number of channels, respectively, with $C=3$ for red-green-blue (RGB) images. The frame index set is defined as
$\mathcal{I}_k=\{1,\ldots,|\mathcal{D}_k|\}$, where $|\mathcal{D}_k|$ denotes the number of frames collected by UAV $k$.
Let $V_{i,k}$ denote the payload size of frame $i$ at UAV $k$. The aggregate candidate payload of UAV $k$ is
$S_k=\sum_{i\in\mathcal I_k}V_{i,k}$, which accommodates heterogeneous frame sizes and different numbers of frames across UAVs. The payload associated with timestamp and pose metadata is negligible under the considered settings.
Due to limited communication resources, the LAWN can upload memories from only a subset of UAVs to the ground server. We therefore introduce a binary memory-selection set $\mathcal{X}=\{x_{1},\cdots, x_{K}\}$ (vector form $\mathbf{x}=[x_{1},\cdots, x_{K}]^T$), where $x_k\in\{0,1\}$ indicates whether UAV $k$ is selected for memory upload.
Equivalently, let $\mathcal{S}(\mathcal{X})=\{k\in\{1,\cdots,K\}:x_{k}=1\}$ 
denote the set of selected UAVs. The resulting memory data collected from the LAWN are then given by
\begin{align}
&\mathcal{E}(\mathcal{X}) =\bigcup_{k\in\mathcal{S}(\mathcal{X})}
\bigcup_{i\in \mathcal{I}_k}
\Big\{(t_{i,k},\mathbf{s}_{i,k},\mathbf{v}_{i,k})\Big\}.
\label{data}
\end{align}

\subsubsection{Memory-Empowered Query}

For each selected UAV, the ground server captions the uploaded images using a VLM,
$\mathbf{c}_{i,k}=\texttt{VLM}\left(\mathbf{v}_{i,k}\right)$,
yielding a set of captions $\mathcal{C}=\{\mathbf{c}_{i,k}:k\in\mathcal{S}(\mathcal{X}), i\in\mathcal{I}_k\}$. These captions summarize the visual observations collected by the UAVs over time and are organized into a queryable memory.
We encode the captions using a text embedding function $E(\cdot)$ and store the resulting time-pose-embedding tuples in a vector database, e.g., \textit{Milvus}. The global memory is given by
\begin{align}\label{memory}
        \mathcal{M}(\mathcal{X})=&
        \mathcal{M}_0\bigcup 
        \bigcup_{k\in\mathcal{S}(\mathcal{X})}
        \mathcal{M}_k,
        \\
        \mathcal{M}_k=&\bigcup_{i\in\mathcal{I}_k}
\left\{\left(t_{i,k},\mathbf{s}_{i,k},E\left(\mathbf{c}_{i,k}\right)\right)\right\},
\end{align}
where $\mathcal{M}_0$ denotes the historical memory available at the ground server before the current memory collection. The resulting vector memory supports efficient retrieval over millions of embeddings \cite{anwar2025remembr}.
Conditioned on $\mathcal{M}(\mathcal{X})$, the system answers each query $\mathbf{q}_j\in\mathcal{Q}$ as
$\mathbf{a}_j(\mathcal{X})
=
\texttt{LLM}\!\left(
\mathbf{q}_j
\mid
\mathcal{M}(\mathcal{X})
\right)$.
Accordingly, the complete QA process can be written compactly as
\begin{align}\label{llm}
    \mathcal{A}(\mathcal{X})=\texttt{LLM}(\mathcal{Q}|\mathcal{M}(\mathcal{X})).
\end{align}

\subsection{Memory Quality Optimization}

In the considered system, the controllable design variables include the UAV memory-selection decisions $\mathcal{X}=\{x_1,\ldots,x_K\}$ and the corresponding transmit powers $\mathcal{P}=\{p_1,\ldots,p_K\}$.
For each selected UAV, i.e., $x_k=1$, its achievable transmission rate must be sufficient to upload the candidate memory payload $S_k$ within the transmission deadline $T$.
Under spatial multiplexing, this requirement can be modeled as \cite{wang2020machine}
\begin{align}
B\log_2\left(
1+\frac{H_k p_k}
{\sum_{j\neq k}^{K} I_{k,j}p_j+\sigma^2}
\right)
\geq
\frac{x_k S_k}{T},
\ \forall k\in\mathcal{K},
\label{rate_constraint}
\end{align}
where $p_k$ denotes the transmit power of UAV $k$, $B$ is the transmission bandwidth, and $\sigma^2$ denotes the noise power.
Moreover, $H_k$ denotes the desired-link channel gain from UAV $k$ to its corresponding receive stream at the ground server, while $I_{k,j}$ ($I_{k,k}=H_k$) denotes the effective interference gain from UAV $j$ to the receive stream associated with UAV $k$ \cite{wang2020machine}.
To limit energy consumption and inter-UAV interference, the transmit powers are subject to both individual and sum-power constraints:
$\{0 \leq p_k \leq P_{\text{max}}, \forall k\}$ and $\sum_{k=1}^Kp_k\leq P_{\mathrm{sum}}$, where $P_{\mathrm{max}}$ and $P_{\mathrm{sum}}$ denote the individual and total transmit-power budgets, respectively \cite{ye2025integrated,wang2020machine,wang2020angle}.

Subject to the communication constraints, our objective is to maximize the answer accuracy of the LAQA system. 
For a question $q$ drawn from the target query distribution, this accuracy is $\Pr(a(q|\mathcal M)=a^*(q))$, where $a(q|\mathcal M)$ and $a^*(q)$ denote the predicted and ground-truth answers, respectively.
Since the answer accuracy is determined by the aggregated vector memory $\mathcal{M}$, we define a memory-quality function $\Psi(\mathcal{M})$ as the resulting LAQA accuracy conditioned on $\mathcal{M}$.
Accordingly, the MemCen problem is formulated as follows:
\begin{subequations}
\begin{align}
\mathsf{P}:~~&\mathop{\mathrm{max}}_{\substack{\mathcal{M},\mathcal{X},\mathcal{P}}}
\quad \Psi\left[\mathcal{M}(\mathcal{X})\right],\\
 \! \!\! \! \textrm{s.t.} ~~ & 
 \mathrm{log}_2\left(1+\frac{H_{k}p_{k}}{\sum_{j\neq k}^KI_{k,j}p_{j}+
\sigma^2} \right)
{\geq \frac{x_{k}S_k}{BT}, \ \forall k,}  \label{Pb} \\
       \! \! \! \!  & 0\leq p_k\leq P_{\mathrm{max}}, \ \forall k,  \ \sum_{k=1}^{K}p_{k} \leq P_{\mathrm{sum}}. \label{Pc} \\
   \! \! \! \!  &  x_k\in\{0,1\}, \ \forall k.
   \label{Pd}
\end{align}
\end{subequations}

Since $\Psi(\mathcal{M})$ differs fundamentally from the classical sum-rate objective
$R(\mathcal{P})=\sum_{k=1}^K B\log_2\left(
1+\frac{H_kp_k}{\sum_{j\neq k}^{K}I_{k,j}p_j+\sigma^2}
\right)$,
there exists an inherent \emph{memorization--communication tradeoff}.
A direct weighted combination of $\Psi$ and $R$ is inappropriate because they have different units and numerical scales.
Let $(\Psi^{\mathrm A},R^{\mathrm A})$ denote the point obtained by maximizing QA accuracy, and let $(\Psi^{\mathrm R},R^{\mathrm R})$ denote that obtained by maximizing the sum rate.
We normalize the two objectives as
\begin{align}
\bar\Psi=\frac{\Psi-\Psi^{\mathrm R}}{\Psi^{\mathrm A}-\Psi^{\mathrm R}}, \
\bar R=\frac{R-R^{\mathrm A}}{R^{\mathrm R}-R^{\mathrm A}}.
\label{eq:normalized-objectives}
\end{align}
Accordingly, the normalized accuracy--rate optimization problem is formulated as
\begin{align}
\mathsf{Q}:~~&\mathop{\mathrm{max}}_{\substack{\mathcal{M},\mathcal{X},\mathcal{P}}}
\ \mu\bar\Psi\left[\mathcal{M}(\mathcal{X})\right] +(1-\mu)\bar R(\mathcal{P}),
\nonumber
\\
 \! \!\! \! \textrm{s.t.} ~~ & \mathrm{constraints} \ \eqref{Pb}, \, \eqref{Pc}, \, \eqref{Pd}.
\end{align}
where $\mu\in(0,1)$ controls the tradeoff between memorization and communication performance.
By varying $\mu$ and repeatedly solving $\mathsf{Q}$, a set of boundary operating points can be obtained, thereby characterizing the achievable accuracy--rate region.

The challenge in solving $\mathsf{P}$ and $\mathsf{Q}$ is that the memory-quality function $\Psi$ has no explicit analytical form.
Section~III therefore introduces GAE to obtain a computable surrogate for $\Psi$.

\emph{Remark 1 (Modeling of ${H_k,I_{k,j}}$)}:
We do not impose a distance-based threshold to discard potential interferers.
For receive stream $k$, all other active UAVs constitute the interference set
$\mathcal J_k=\{j\in\mathcal K:j\neq k,\,p_j>0\}$ and the aggregate interference power is given by $\sum_{j\in\mathcal J_k}I_{k,j}p_j$.
Hence, weak interferers are retained in the model, while their impact is naturally attenuated by the corresponding beamformers.
Let $\mathbf{h}_k\in\mathbb{C}^{N}$ denote the channel vector from UAV $k$ to the $N$-antenna array at the ground server.
Given the receive beamformer $\mathbf{w}_k$, the effective desired-link and interference-link gains are respectively defined as $H_{k}=|\mathbf{w}_k^H\mathbf{h}_{k}|^{2}$ for the desired link and $I_{k,j}=|\mathbf{w}_k^H\mathbf{h}_{j}|^{2}$ for the interference link from UAV $j$ to $k$.

\emph{Remark 2 (MRC and IRC Receivers)}:
We consider two representative choices of the receive beamformer $\mathbf{w}_k$.
The first is the low-complexity maximum-ratio combining (MRC) receiver, $\mathbf{w}_k=
\left\Vert\mathbf{h}_{k}\right\Vert_2^{-1}
\mathbf{h}_{k}$, which requires only the desired-link channel and is therefore attractive for low-complexity reception \cite{wang2020machine}.
However, MRC does not explicitly suppress multiuser interference.
Under MRC, the effective gains reduce to $H_k= \|\mathbf{h}_k\|_2^2$ and 
$I_{k,j}=\frac{|\mathbf{h}_k^H\mathbf{h}_{j}|^2}{\left\Vert\mathbf{h}_{k}\right\Vert_2^2}$. 
The second is interference rejection combining (IRC), which actively suppresses multiuser interference \cite{tusha2024interference}.
Defining the interference-plus-noise covariance matrix for stream $k$ as
$\mathbf{R}_k
=
\sum_{j\neq k}\mathbf{h}_j\mathbf{h}_j^H
+\sigma^2\mathbf{I}_N$,
the IRC beamformer is obtained from
$\mathbf{w}_k
=
\arg\max_{\mathbf{w}}
\frac{
\mathbf{w}^H\mathbf{h}_k\mathbf{h}_k^H\mathbf{w}
}{
\mathbf{w}^H\mathbf{R}_k\mathbf{w}
}$. 
We construct $\mathbf{R}_k$ without transmit-power weighting for receive-beamformer design.
IRC is particularly suitable for interference-limited operation when the ground server has access to multiuser channel information and can estimate the interference-plus-noise covariance matrix.

\section{Proposed Memory Quality Model}\label{section3}

The value of a memory is determined by its contribution to downstream QA rather than by the information it contains alone.
Even a frame containing task-relevant evidence may provide little benefit if the deployed system fails to caption it accurately, retrieve it effectively, or reason over it correctly.
Accordingly, we define memory quality with respect to both the deployed black-box QA pipeline and the current global memory $\mathcal{M}_0$.
Specifically, the utility of a candidate memory $\mathcal{M}_k$ is defined by the marginal improvement in QA performance obtained when $\mathcal{M}_k$ augments $\mathcal{M}_0$.
However, this utility cannot be evaluated directly at collection time because the future user questions are unknown.

\subsection{Generative Adversarial Exam}

GAE estimates this latent utility through a candidate-grounded proxy exam.
Specifically, candidate memory $\mathcal{M}_k$ acts as the examiner and generates a set of questions 
$\widetilde{\mathcal Q}_k=\{\widetilde q_{k,\ell}\}_{\ell=1}^{L_k}$ from verified facts contained in its pilot data, while the current global memory $\mathcal{M}_0$ acts as the examinee.
The normalized QoM is defined as
\begin{equation}
\begin{aligned}
\texttt{QoM}_k
&=
\frac{1}{L_k}
\underbrace{
\sum_{\ell=1}^{L_k}
\mathds{1}\left\{
\mathcal M_0 \ \text{incorrectly answers} \
\widetilde{q}_{k,\ell}
\right\}
}_{:=\texttt{Score}_k}.
\end{aligned}
\label{eq:qom-definition}
\end{equation}
where $\mathds{1}\{\cdot\}$ denotes the indicator function and $\texttt{Score}_k$ denotes the number of questions that cannot be correctly answered using $\mathcal{M}_0$.
Correctness is evaluated through the deployed retrieval-and-answering pipeline.
Hence, a larger $\texttt{QoM}_k$ indicates a larger knowledge gap in the current global memory and, consequently, a higher potential utility of $\mathcal{M}_k$.

To account for nonuniform future queries, we partition their evidence sources as
$\mathcal{Q}=\bigcup_{k=1}^{K}\mathcal{Q}_k$
and define the query prior
\begin{align}
\eta_k=\Pr(q\in\mathcal Q_k),\ \sum_{k=1}^{K}\eta_k=1.
\label{eq:question-prior}
\end{align}
The first-order marginal utility is then approximated by $\eta_k\texttt{QoM}_k$. If a total exam
budget $L$ is allocated according to $L_k\approx L\eta_k$, then
$\texttt{Score}_k=L_k\texttt{QoM}_k\approx L\eta_k\texttt{QoM}_k$. 
This yields the following optimization-compatible approximation:
\begin{align}\label{gae}
\Psi\left[\mathcal{M}(\mathcal{X})\right]
\approx\Psi(\mathcal M_0)+\sum_{k=1}^{K}\eta_k\texttt{QoM}_k x_k.
\end{align}

When no task-specific prior is available, $\eta_k$ can be chosen according to the data-size prior under identical sampling density.
For task-conditioned missions, a semantic belief map $B(i,j)$ can be used to assign
$\eta_k
\propto
\sum_{(i,j)\in\Omega_k} B(i,j)$, where $\Omega_k$ denotes the region observed by UAV $k$.
The server can recompute these weights whenever the mission prior changes.

\begin{figure}[t]
    \centering
    \includegraphics[width=0.49\textwidth]{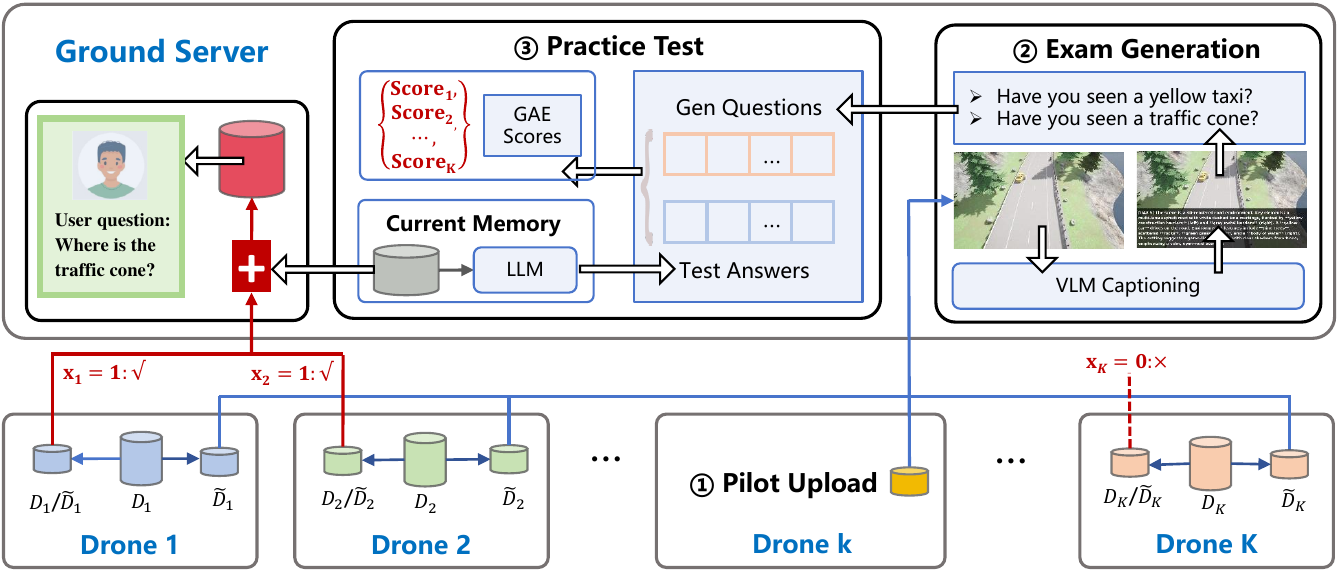}
    \caption{Architecture of GAE for QoM computation.}
    \label{fig2}
    \vspace{-0.1in}
\end{figure}

The architecture of GAE is illustrated in Fig.~\ref{fig2}.
Its inputs are the distributed candidate datasets $\{\mathcal{D}_k\}$ and the pre-collection memory $\mathcal{M}_0$ at the ground server.
The GAE pipeline consists of three steps.
\begin{itemize}
    \item 
    First, each UAV uploads a representative pilot subset $\widetilde{\mathcal{D}}_k\subseteq\mathcal{D}_k$. Random sampling provides a low-complexity fallback, whereas key-frame sampling reduces the risk of overlooking sparse but informative events.

\item Second, the server captions the pilot data and generates $L_k$ grounded question--answer pairs from the resulting observations.

\item Third, the deployed retrieval-and-answering pipeline evaluates whether $\mathcal{M}_0$ can correctly answer these questions and outputs structured answerability decisions. The resulting decisions are deterministically counted to obtain $\texttt{Score}_k$ and $\texttt{QoM}_k$ without prompting.
\end{itemize}

\subsection{Overhead Analysis}\label{sec:gae-overhead}

GAE incurs two sources of overhead:
\begin{align}
\Delta T_{\mathrm{GAE}}=T_{\mathrm{Pilot}}+T_{\mathrm{Comp}},
\label{eq:gae-overhead}
\end{align}
where $T_{\mathrm{Pilot}}$ denotes the wireless latency for uploading pilot data, and $T_{\mathrm{Comp}}$ denotes the server-side latency for captioning, exam generation, retrieval, and answering.
Let $\widetilde{\mathcal I}_k$ denote the pilot-frame index set of UAV $k$, and define the corresponding pilot payload as $\widetilde S_k=\sum_{i\in\widetilde{\mathcal I}_k}V_{i,k}$. Under a feasible equal-power pilot uplink with
$\bar p=\min\{P_{\mathrm{max}},P_{\mathrm{sum}}/K\}$, the common pilot-upload duration is given by
\begin{align}
T_{\mathrm{Pilot}}
=\max_{k\in\mathcal K}
\frac{\widetilde S_k}
{B\log_2\!\left(1+
\frac{H_k\bar p}{\sum_{j\ne k}I_{k,j}\bar p+\sigma^2}\right)}.
\label{eq:pilot-overhead}
\end{align}
Define the pilot ratio as
$\rho_k=|\widetilde{\mathcal D}_k|/|\mathcal D_k|$.
For a small pilot, $\widetilde S_k\ll S_k$, and hence the pilot-upload overhead is substantially lower than that of uploading the complete candidate memory.
Increasing the pilot size may improve the reliability of QoM estimation and ranking, but incurs a larger $T_{\mathrm{Pilot}}$.

For the computational overhead $T_{\mathrm{Comp}}$, directly feeding the entire historical memory $\mathcal M_0$ to the LLM for every proxy question would require repeatedly processing a long context.
Instead, iterative memory retrieval (IMR) \cite{anwar2025remembr} retrieves a compact evidence subset
$\mathcal S_0^*\subseteq\mathcal M_0$ and performs
\begin{align}\label{imr}
\widetilde{\mathcal A}_k
=\texttt{LLM}(\widetilde{\mathcal Q}_k|\mathcal S_0^*),
\ \mathcal S_0^*\subseteq\mathcal M_0.
\end{align}
The retrieval process terminates once sufficient supporting evidence is identified or the maximum number of retrieval rounds is reached.
To ensure reliable scoring, exam generation is conditioned on verified scene facts and produces grounded question--answer pairs in structured JSON format.
The answerability prompt prohibits unsupported guessing and requires evidence identifiers.
Malformed or truncated JSON outputs are treated as invalid runs rather than incorrect answers, preventing formatting failures from biasing the QoM score.

Let $Z_{k,\ell}$ denote the number of retrieval rounds required for the $\ell$-th question of UAV $k$, $T_{\mathrm{gen}}$ the latency of one exam-generation call, and $T_0$ the latency of one retrieval-and-answering round.
With $N_{\mathrm{inf}}$ effective concurrent inference slots, the computational latency can be approximated as
\begin{align}
T_{\mathrm{Comp}}
\approx
T_{\mathrm{cap}}
+
\left\lceil
\frac{K}{N_{\mathrm{inf}}}
\right\rceil
T_{\mathrm{gen}}
+
\left\lceil
\frac{\sum_{k,\ell}Z_{k,\ell}}{N_{\mathrm{inf}}}
\right\rceil
T_0,
\label{eq:gae-computation}
\end{align}
where $T_{\mathrm{cap}}$ denotes the one-time latency for captioning the pilot data.
Accordingly, one GAE evaluation requires
\begin{align}
N_{\mathrm{LLM}}=K+\sum_{k,\ell}Z_{k,\ell}, \
N_{\mathrm{ret}}=\sum_{k,\ell}Z_{k,\ell}.
\label{eq:gae-call-count}
\end{align}
where $N_{\mathrm{LLM}}$ and $N_{\mathrm{ret}}$ denote the numbers of LLM calls and retrieval operations, respectively.
The total computational work scales as $\mathcal O\!\left(K+\sum_{k=1}^{K}\sum_{\ell=1}^{L_k}Z_{k,\ell}\right)$. Concurrency reduces wall-clock latency but does not reduce the total amount of computation.

\section{MemCen Resource Allocation}\label{section4}

Substituting the task-driven marginal utility derived in Section~III renders $\mathsf{P}$ computable:
\begin{subequations}
\begin{align}
\mathsf{P}1:\mathop{\mathrm{max}}_{\substack{\mathcal{X},\mathcal{P}}}
\quad& {\sum_{k=1}^K\eta_k\texttt{QoM}_k x_k},\\
 \! \!\! \! \textrm{s.t.} ~~~~ & 
 \mathrm{log}_2\left(1+\frac{H_{k}p_{k}}{\sum_{j\neq k}^KI_{k,j}p_{j}+
\sigma^2} \right)
\geq \lambda_k x_k, \ \forall k, \label{Pb2} \\
&  \textsf{constraints } (\ref{Pc}),(\ref{Pd}).
\end{align}
\end{subequations}
Compared with constraint~\eqref{Pb}, constraint~\eqref{Pb2} accounts for the pilot data that have already been uploaded during GAE.
Specifically, given the measured pilot payload $\widetilde S_k$ and the common pilot-upload duration $T_{\mathrm{Pilot}}$ in \eqref{eq:pilot-overhead}, the normalized rate requirement for transmitting the remaining candidate memory is
$\lambda_k
=
\frac{S_k-\widetilde S_k}
{(T-T_{\mathrm{Pilot}})B}$.
Thus, only the residual payload $S_k-\widetilde S_k$ needs to be delivered within the remaining wireless transmission budget $T-T_{\mathrm{Pilot}}$.

$\mathsf{P}1$ couples binary selection and continuous power. It
remains nonconvex after relaxing $\mathbf x$ because of multiuser
interference. 
In the following, we adopt PSO to obtain a
stationary solution through convex surrogate subproblems. We also use a learned L2M policy to achieve real-time inference.

\subsection{Penalty Augmentation}

To address the discontinuity in constraint \eqref{Pd}, we relax the binary constraint $x_{k}\in\{0,1\}$ into an affine constraint $0\leq x_{k} \leq 1$, $\forall k$. 
However, the relaxation is generally not tight and the solution to the relaxed problem could be $0<x_{k}<1$. 
To promote a binary solution for the relaxed variable $\{x_{k}\}$, we augment the objective function with a penalty term as in \cite{rinaldi2009new}. 
Here, we adopt the following penalty function \cite{lucidi2010exact}:
$\Xi(\mathcal X)=\frac{1}{\beta}\sum_{k=1}^K x_{k}(1-x_k)$,
where $\beta>0$ is the penalty parameter.
With this penalty, problem $\mathsf{P}1$ is transformed into 
\begin{subequations}
\begin{align}
\mathsf{P}2:\mathop{\mathrm{min}}_{\substack{\mathcal{X},\mathcal{P}}}
\quad& {-\sum_{k=1}^K\eta_k\texttt{QoM}_k x_k}
+
\Xi(\mathcal X)   \label{P2a}
,\\
 \! \!\! \! \textrm{s.t.} ~~~~ & 
{\Theta}_{k}(\mathcal{X},\mathcal{P})\leq 0, \ \forall k, \label{P2b} \\
\! \! \! \!  & 0\leq p_k\leq P_{\mathrm{max}}, \ \forall k,  \ \sum_{k=1}^{K}p_{k} \leq P_{\mathrm{sum}}, 
\label{P2c}
\\
   \! \! \! \!  &  0\leq x_k \leq 1, \ \forall k,
   \label{P2d}
\end{align}
\end{subequations}
where 
\begin{align}
  {\Theta}_{k}(\mathcal{X},\mathcal{P})
  =
  \lambda_k x_k-\log_2 \left( 1 + \frac{H_{k}p_{k}}
	{
    \sum_{j\neq k} I_{k,j}p_{j}+\sigma^2} \right).
\end{align}  
According to \cite[Proposition 1]{lucidi2010exact}, there exists
$\bar\beta>0$ such that $\mathsf{P}1$ and $\mathsf{P}2$ are globally
equivalent for a sufficiently small $0<\beta\leq\bar\beta$.

\subsection{Successive Optimization and Learning to Memorize}

The remaining nonconvex terms are the functions $\Xi(\mathcal X)$ in \eqref{P2a} and ${\Theta}_{k}(\mathcal{X},\mathcal{P})$ in \eqref{P2b}.
We apply successive optimization to these terms by constructing a sequence of upper bounds $\{\widehat{\Xi}\}$ on $\Xi(\mathcal X)$ and replacing $\Xi(\mathcal X)$ in $\mathsf{P}2$ with $\{\widehat{\Xi}\}$ to obtain the surrogate problems. 
Similarly, we construct upper bounds $\{\widehat{\Theta}_k\}$ on 
$\Theta_k$ and replace $\Theta_k$ in $\mathsf{P}2$ with $\{\widehat{\Theta}_k\}$.
Specifically, given any feasible solution $\{\mathcal{X}^\star,\mathcal{P}^\star\}$ to $\mathrm{P}2$, we define two surrogate functions
\begin{subequations}
\begin{align}
&
\quad\quad
\widehat{\Xi}(\mathcal X|\mathcal X^\star ) = 
\sum_{k=1}^K 
\left(
\frac{1}{\beta}x_{k}-\frac{2}{\beta}x_{k}^{\star}x_{k}+\frac{1}{\beta}x_{k}^{\star^2}
\right),
\\
&\widehat{\Theta}_{k}(\mathcal{X},\mathcal{P}|\mathcal{P}^\star)
=\lambda_kx_k-\frac{1}{\mathrm{ln}2}
\Bigg[
\mathrm{ln}\left(\sum_{l=1}^K\frac{I_{k,l}p_{l}}{\sigma^2}+1\right)
\nonumber\\
&
-
\mathrm{ln}\left(\sum_{l=1,l\neq k}^K\frac{I_{k,l}p^\star_{l}}{\sigma^2}+1\right)
-\left(\sum_{l=1,l\neq k}^K\frac{I_{k,l}p^\star_{l}}{\sigma^2}+1\right)^{-1}
\nonumber\\
&
\times\left(\sum_{l=1,l\neq k}^K\frac{I_{k,l}p_{l}}{\sigma^2}+1\right)
+1
\Bigg], 
\label{Phi}
\end{align}
\end{subequations}
and the following proposition can be established.

\begin{proposition}
The functions $\{\widehat{\Xi},\widehat{\Theta}_k\}$ satisfy the following:

\noindent(i) Upper bound: 
$$\widehat{\Xi}(\mathcal X|\mathcal X^\star )\geq \Xi(\mathcal{X}), \ \widehat{\Theta}_{k}(\mathcal{X},\mathcal{P}|\mathcal{P}^\star)\geq 
{\Theta}_{k}(\mathcal{X},\mathcal{P}).$$

\noindent(ii) Convexity: $\widehat{\Xi}(\mathcal X|\mathcal X^\star )$ is convex in $\mathcal{X}$, and $\widehat{\Theta}_{k}(\mathcal{X},\mathcal{P}|\mathcal{P}^\star)$ is jointly convex in $(\mathcal{X},\mathcal{P})$.

\noindent(iii) Local equivalence: 
\begin{align}
    \widehat{\Xi}(\mathcal X^\star|\mathcal X^\star )= \Xi(\mathcal{X}^\star), \ &  \nabla_{\mathcal{X}}\widehat{\Xi}(\mathcal X^\star|\mathcal X^\star )= \nabla_{\mathcal{X}}\Xi(\mathcal{X}^\star),
        \nonumber\\
    \widehat{\Theta}_{k}(\mathcal{X}^\star,\mathcal{P}^\star|\mathcal{P}^\star) &  = {\Theta}_{k}(\mathcal{X}^\star,\mathcal{P}^\star), 
    \nonumber\\
    \nabla_{(\mathcal{X},\mathcal{P})}\widehat{\Theta}_{k}(\mathcal{X}^\star,\mathcal{P}^\star|\mathcal{P}^\star) & = \nabla_{(\mathcal{X},\mathcal{P})} {\Theta}_{k}(\mathcal{X}^\star,\mathcal{P}^\star).
\end{align}
\end{proposition}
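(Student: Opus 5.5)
The plan is to treat the two surrogates separately, since each comes from linearizing the concave part of a difference-of-convex decomposition. For $\Xi$, I will write $\Xi(\mathcal X)=\frac{1}{\beta}\sum_k x_k-\frac{1}{\beta}\sum_k x_k^2$. The first sum is linear and the second is concave. Replacing the concave term $-x_k^2$ by its first-order Taylor expansion at $x_k^\star$, namely $-x_k^{\star 2}-2x_k^\star(x_k-x_k^\star)=-2x_k^\star x_k+x_k^{\star2}$, yields exactly $\widehat{\Xi}$. The three claims then reduce to elementary identities:
\begin{align*}
\widehat{\Xi}(\mathcal X|\mathcal X^\star)-\Xi(\mathcal X)=\frac{1}{\beta}\sum_k (x_k-x_k^\star)^2\geq 0,
\end{align*}
which gives (i). Since $\widehat{\Xi}$ is affine in $\mathcal X$, (ii) holds. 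Finally, the difference above vanishes together with its gradient at $\mathcal X=\mathcal X^\star$, which gives (iii).

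For $\Theta_k$, I will use the standard rewrite $\log_2(1+\mathrm{SINR}_k)=\frac{1}{\ln 2}\big[\ln(\sum_{l}I_{k,l}p_l/\sigma^2+1)-\ln(\sum_{l\neq k}I_{k,l}p_l/\sigma^2+1)\big]$, using $I_{k,k}=H_k$. With $u_k(\mathcal P)=\sum_{l\neq k}I_{k,l}p_l/\sigma^2+1$, this gives
\[
\Theta_k=\lambda_kx_k-\tfrac{1}{\ln2}\ln\big(\textstyle\sum_l I_{k,l}p_l/\sigma^2+1\big)+\tfrac{1}{\ln2}\ln u_k(\mathcal P).
\]
The term $\ln u_k$ is concave because it is the log of an affine positive function. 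Its tangent at $\mathcal P^\star$ is $\ln u_k^\star+(u_k-u_k^\star)/u_k^\star=\ln u_k^\star+u_k/u_k^\star-1$, and substituting this tangent reproduces exactly the bracketed expression in \eqref{Phi}. This algebraic matching is the step I would verify carefully, including the sign of the trailing $+1$ inside the bracket.

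With that decomposition, the properties follow directly. For (i), concavity gives $\ln u_k\leq \ln u_k^\star+u_k/u_k^\star-1$, and multiplying by $1/\ln2>0$ yields $\widehat{\Theta}_k\geq\Theta_k$. For (ii), $\lambda_kx_k$ is linear, $-\ln(\text{affine})$ is convex, and the linearized term is affine in $\mathcal P$, so $\widehat{\Theta}_k$ is jointly convex in $(\mathcal X,\mathcal P)$. For (iii), the difference $\widehat{\Theta}_k-\Theta_k=\frac{1}{\ln2}[\ln u_k^\star+u_k/u_k^\star-1-\ln u_k]$ vanishes at $\mathcal P^\star$. Its gradient with respect to $p_l$ is $\frac{1}{\ln2}\frac{I_{k,l}}{\sigma^2}\big(1/u_k^\star-1/u_k\big)$, which is also zero at $\mathcal P^\star$, and the $\mathcal X$-parts of the two functions coincide.

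I expect no real obstacle here. The only delicate points are confirming that $u_k>0$ on the feasible set, which holds because $p_l\ge0$, and checking the exact algebraic form of \eqref{Phi}.
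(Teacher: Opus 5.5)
Your proposal is correct and follows essentially the same route as the paper: the identity $\widehat{\Xi}-\Xi=\frac{1}{\beta}\sum_k(x_k-x_k^\star)^2$, the tangent (difference-of-convex) bound on the concave term $\ln u_k$, convexity of the resulting surrogates, and evaluation of function values and gradients at the expansion point. The bracket in \eqref{Phi} does match your tangent, including the trailing $+1$, and your argument is more complete than the paper's, which only asserts $\widehat{\Theta}_k-\Theta_k\geq 0$ without citing the concavity of $\ln u_k$. The one slip is cosmetic: your gradient formula for the difference holds only for $p_l$ with $l\neq k$, while the derivative with respect to $p_k$ is identically zero because $u_k$ does not depend on $p_k$.
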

\begin{proof}
{Part (i) follows from
$$
\widehat{\Xi}(\mathcal X|\mathcal X^{[n]} )-\Xi(\mathcal X)
=
\frac{1}{\beta}\sum_{k=1}^K(x_{k}-x_{k}^{[n]})^2\geq 0,$$
$$
\widehat{\Theta}_{k}(\mathcal{X},\mathcal{P}|\mathcal{P}^\star)- 
{\Theta}_{k}(\mathcal{X},\mathcal{P})
\geq 0.
$$
Part (ii) follows from the positive-semidefinite Hessians of
$\widehat{\Xi}$ and $\widehat{\Theta}_k$. Part (iii) follows by evaluating
their function values and gradients at the expansion point.}
\end{proof}

By part (i) of \textbf{Proposition 1}, we obtain an upper bound by replacing the functions $\{\Xi,\Theta_k\}$ by $\{\widehat{\Xi},\widehat{\Theta}_k\}$ around a feasible point.
We tighten the bound iteratively by using each solution as the expansion point for the next surrogate.
In particular, assuming that the solution at the $n^{\mathrm{th}}$ iteration is given by $\{\mathcal{X}^{[n]},\mathcal{P}^{[n]}\}$, the following problem is considered at the $(n+1)^{\mathrm{th}}$ iteration:
\begin{subequations}
\begin{align}
\mathsf{P}2[n+1]:\mathop{\mathrm{min}}_{\substack{\mathcal{X},\mathcal{P}}}
\quad& {-\sum_{k=1}^K\eta_k\texttt{QoM}_k x_k}
+
\widehat{\Xi}(\mathcal X|\mathcal X^{[n]})  \label{P2n+1}
,\\
 \! \!\! \! \textrm{s.t.} ~~~~ & 
\widehat{\Theta}_{k}(\mathcal{X},\mathcal{P}|\mathcal{P}^{[n]})\leq 0, \ \forall k, \\
&  \textsf{constraints } (\ref{P2c}),(\ref{P2d}).
\end{align}
\end{subequations}

By part (ii) of \textbf{Proposition 1}, the problem $\mathrm{P}2[n+1]$ is convex and can be solved by off-the-shelf software packages (e.g., Mosek) for convex programming.
Denote its optimal solution as
$\{\mathcal{X}^*,\mathcal{P}^*\}$. 
Then we set
$\mathcal{X}^{[n+1]}=\mathcal{X}^*$ and $\{\mathcal{P}^{[n+1]}=\mathcal{P}^*\}$, and repeat the process by solving $\mathrm{P}2[n+2]$.
By part (iii) of \textbf{Proposition 1} and
\cite[Theorem 1]{sun2016majorization}, every limit point of the generated
sequence is a stationary point of $\mathrm{P}2$ when the initial point is
feasible. After convergence, we apply thresholding to the relaxed selection variables to obtain binary UAV-selection decisions.

PSO starts from a feasible point and iteratively solves $\mathrm{P}2[n+1]$ until both the objective value and optimization variables converge.
Each subproblem involves $2K$ decision variables, and the computational cost of the interior-point solver grows polynomially with $K$.
The repeated optimization required by PSO may therefore limit online responsiveness.

To mitigate this computational cost, we develop an L2M solver following the learning-to-optimize paradigm \cite{shlezinger2023model,Liu2024survey}.
Solutions obtained by PSO are used as supervision for training.
The input features include task-weighted QoM values, memory payload sizes, desired-link and interference gains, noise power, and power budgets, while the labels consist of the corresponding joint UAV-selection and power-allocation decisions.
A three-layer fully connected network with 100, 72, and 20 hidden units is trained using focal loss and AdamW.
During online inference, the server feeds the current memory-utility and channel-state features into the trained network.
A constraint-aware decoder then masks infeasible UAV selections and determines the corresponding transmit-power allocation.
The overall training and inference workflow is illustrated in Fig.~\ref{fig:fig3}.

To accommodate a varying number of UAVs, L2M adopts a fixed $K_{\max}$-slot representation.
Unused slots are zero-padded and their output logits are masked, enabling the same network architecture to support any $K\leq K_{\max}$.
Thus, L2M learns to reproduce PSO decisions under the same MemCen objective while substantially reducing online optimization overhead.

\begin{figure}[!t]
    \centering
    \includegraphics[width=0.48\textwidth]{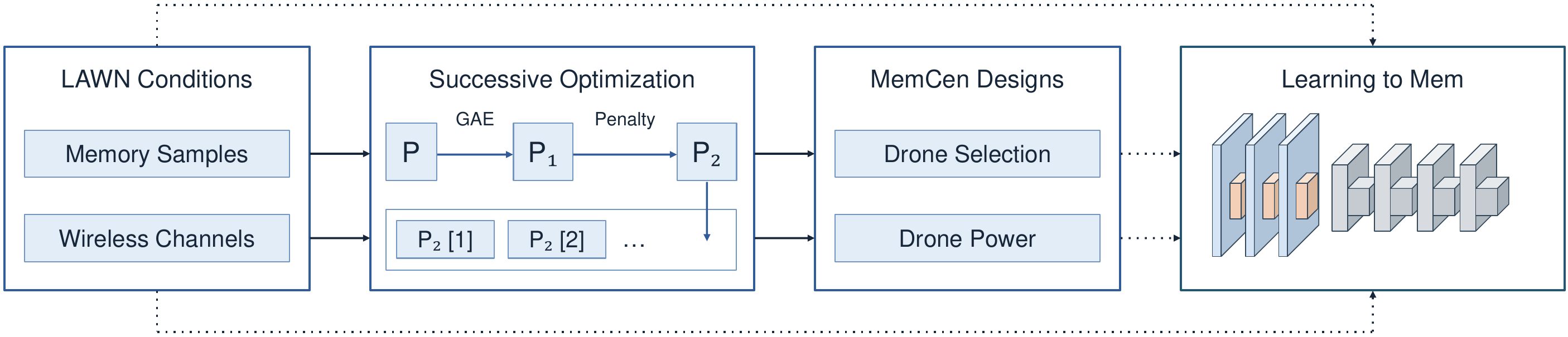}
    \caption{L2M for real-time decision making.}
    \label{fig:fig3}
    \vspace{-0.15in}
\end{figure}

\subsection{QoM-Aware Capped Water Filling}

To gain analytical insight into MemCen, we consider the asymptotic relaxation
of $\mathsf{P}2$ as $\beta\to\infty$. In this limit, the penalty term
$\Xi(\mathcal X)$ vanishes.
For any fixed $\mathbf p$, an optimal relaxed selection can be chosen as
\begin{align}
x_k^\star(\mathbf p)=\min\!\left\{1,
\frac{1}{\lambda_k}\log_2\!\left(1+\frac{H_kp_k}{
\sum_{j\ne k}I_{k,j}p_j+\sigma^2}\right)\right\}.
\label{eq:continuous-selection}
\end{align}
Substituting \eqref{eq:continuous-selection} into the objective removes the
explicit selection variables $\mathcal X$. We define
$\widehat T=T-T_{\mathrm{Pilot}}$ as the remaining upload time and
$\widehat S_k=S_k-\widetilde S_k$ as the remaining payload of UAV $k$. Under
the noise-limited condition $\sum_{j\ne k}I_{k,j}p_j\ll\sigma^2$,
$\mathsf{P}2$ reduces to the following power-allocation problem:
\begin{align}
\mathsf{P}3: \mathop{\mathrm{max}}_{\mathcal P} \ U(\mathbf p),
\quad \mathrm{s.t.} \ \eqref{P2c}.
\end{align}
where the utility as a function of the power allocation is
\begin{align}
U(\mathbf p)
&=\sum_{k=1}^{K}\eta_k\texttt{QoM}_k\min\!\left\{1,\frac{\widehat TB}{\widehat S_k}
\log_2\!\left(1+\frac{H_kp_k}{\sigma^2}\right)\right\}.
\label{eq:continuous-qom}
\end{align}
The following proposition characterizes the global optimum of $\mathsf{P}3$
and exposes its QoM-aware power-allocation structure.

\begin{proposition}\label{prop:qom-water-filling}
An optimal power allocation $\mathbf p^\star$ to $\mathsf{P}3$ is
\begin{subequations}\label{eq:qom-water-filling}
\begin{align}
p_k^\star(\nu)&=\min\!\left\{\bar p_k,
\left[\frac{\nu\eta_k\texttt{QoM}_k\widehat TB}{\widehat S_k}-\frac{\sigma^2}{H_k}\right]^+\right\},
\label{eq:qom-water-filling-allocation}\\
\bar p_k&=\min\!\left\{P_{\max},p_k^{\mathrm{sat}}\right\},\\
p_k^{\mathrm{sat}}&=\frac{\sigma^2}{H_k}
\left(2^{\widehat S_k/(\widehat TB)}-1\right),
\label{eq:qom-water-filling-cap}
\end{align}
\end{subequations}
where $p_k^{\mathrm{sat}}$ is the saturation power. If $\sum_{k:\eta_k\texttt{QoM}_k>0}\bar p_k\geq P_{\mathrm{sum}}$, $\nu$ is chosen
such that $\sum_kp_k^\star(\nu)=P_{\mathrm{sum}}$. Otherwise,
$p_k^\star=\bar p_k$ for $\eta_k\texttt{QoM}_k>0$ and $p_k^\star=0$ for $\eta_k\texttt{QoM}_k=0$.
\end{proposition}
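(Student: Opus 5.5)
The plan is to exploit the separable structure of $U(\mathbf p)$ in \eqref{eq:continuous-qom} and reduce $\mathsf{P}3$ to a standard concave resource-allocation problem that can be solved through its KKT conditions. Write $w_k=\eta_k\texttt{QoM}_k\ge 0$ and $c_k=\widehat TB/\widehat S_k$, so that the $k$-th summand is $u_k(p_k)=w_k\min\{1,c_k\log_2(1+H_kp_k/\sigma^2)\}$.

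The first step is to show that each $u_k$ is concave and nondecreasing on $[0,\infty)$. It is the pointwise minimum of the constant $1$ and a concave increasing function, scaled by $w_k\ge 0$. The inner term reaches $1$ exactly at $p_k=p_k^{\mathrm{sat}}$ in \eqref{eq:qom-water-filling-cap}, and $u_k$ is constant for $p_k\ge p_k^{\mathrm{sat}}$. Consequently, from any feasible $\mathbf p$ we can replace $p_k$ by $\min\{p_k,\bar p_k\}$ without lowering $U$ or violating \eqref{P2c}. Similarly, setting $p_k=0$ whenever $w_k=0$ loses nothing. We may therefore restrict attention, without loss of optimality, to the box $0\le p_k\le\bar p_k$ for $k$ with $w_k>0$, together with the sum constraint. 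On this box each active $u_k(p_k)=w_kc_k\log_2(1+H_kp_k/\sigma^2)$ is smooth and strictly concave. The reduced problem is then a convex program with linear constraints, so Slater's condition holds and the KKT conditions are necessary and sufficient for global optimality.

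The second step is to write the Lagrangian with a multiplier $\mu\ge 0$ for $\sum_kp_k\le P_{\mathrm{sum}}$ and to handle the box constraints by projection. Stationarity gives $w_kc_kH_k/(\ln 2\,(\sigma^2+H_kp_k))=\mu$ for interior $p_k$. Setting $\nu=1/(\mu\ln 2)$ yields the unconstrained solution $p_k=\nu w_kc_k-\sigma^2/H_k$. Clipping to $[0,\bar p_k]$, with the usual sign conditions on the box multipliers, gives exactly \eqref{eq:qom-water-filling-allocation}.

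The third step splits into two cases. If $\sum_{k:w_k>0}\bar p_k< P_{\mathrm{sum}}$, then every active user can be driven to its cap, where each $u_k$ attains its maximum value $w_k$ over the feasible box. Hence $p_k^\star=\bar p_k$ is globally optimal, corresponding to $\mu=0$. Otherwise the function $g(\nu)=\sum_kp_k^\star(\nu)$ is continuous and nondecreasing, with $g(0)=0$ and $g(\nu)\to\sum_{k:w_k>0}\bar p_k\ge P_{\mathrm{sum}}$ as $\nu\to\infty$. By the intermediate value theorem there exists $\nu$ with $g(\nu)=P_{\mathrm{sum}}$, and for this $\nu$ all KKT conditions hold with $\mu=1/(\nu\ln2)>0$.

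The main obstacle is the nondifferentiability of the $\min\{1,\cdot\}$ at saturation, which rules out applying KKT directly to $\mathsf{P}3$. This is why I would carry out the reduction to the capped box $[0,\bar p_k]$ carefully before invoking KKT. The reduction must also cover the case $P_{\max}<p_k^{\mathrm{sat}}$, in which the cap $\bar p_k=P_{\max}$ binds before saturation; this case is handled by the same projection argument, since the original constraint $p_k\le P_{\max}$ is already part of the box.
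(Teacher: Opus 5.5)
Your proposal is correct and follows essentially the same route as the paper's proof: restriction to the capped box $[0,\bar p_k]$ via constancy beyond $p_k^{\mathrm{sat}}$, KKT for the resulting concave problem with a sum-power multiplier, clipping of the stationary point, and the case split on whether the positive-utility caps exhaust $P_{\mathrm{sum}}$; your monotonicity/IVT argument for the existence of $\nu$ also makes explicit a step the paper only asserts. Two small fixes: in the slack case, $u_k(\bar p_k)=w_k$ holds only when $p_k^{\mathrm{sat}}\le P_{\max}$, so argue instead that $\bar p_k$ maximizes the nondecreasing $u_k$ over $[0,\bar p_k]$; and for the IVT, note that $g(\nu)$ actually reaches $\sum_{k:w_k>0}\bar p_k$ at a finite $\nu$ (each clipped term saturates), which covers the equality case.
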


\begin{proof}
The proof is based on the Karush--Kuhn--Tucker (KKT) conditions. Define
$a_k=\eta_k\texttt{QoM}_k$ and
$c_k=\widehat TB/\widehat S_k$. The $k$-th utility term is constant for
$p_k\geq p_k^{\mathrm{sat}}$. Hence, an optimal solution can be chosen with
$p_k\leq\bar p_k$. Over $0\leq p_k\leq\bar p_k$, each utility term is
nondecreasing and concave, so the KKT conditions are necessary and sufficient.
Let $\lambda\geq0$ be the multiplier of the sum-power constraint. For an
active and unsaturated UAV, stationarity gives
$\frac{a_kc_kH_k}{\ln 2\,(\sigma^2+H_kp_k)}=\lambda$,
which yields
$p_k=\frac{a_kc_k}{\lambda\ln 2}-\frac{\sigma^2}{H_k}$.
Applying complementary slackness to the lower and upper power bounds gives
\eqref{eq:qom-water-filling-allocation} with
$\nu=1/(\lambda\ln 2)$. Solving
$c_k\log_2(1+H_kp_k/\sigma^2)=1$ gives
$p_k^{\mathrm{sat}}$ in \eqref{eq:qom-water-filling-cap}. If the caps of the
positive-utility UAVs sum to at least $P_{\mathrm{sum}}$, $\nu$ is chosen to
meet the sum-power constraint with equality. Otherwise, each positive-utility
UAV receives $\bar p_k$, and the unused budget cannot increase the objective. 
This proves the stated solution.
\end{proof}

Proposition~\ref{prop:qom-water-filling} extends classical water filling with
a task-utility weight and an upload-completion cap. When the aggregate cap of
the positive-utility UAVs meets or exceeds the power budget, the water level $\nu$ can
be found by bisection so that the allocation exhausts $P_{\mathrm{sum}}$.
Otherwise, every positive-utility UAV receives its capped power, and any
remaining power provides no additional memory utility. Within the unsaturated
region, the allocated power is nondecreasing in $\eta_k\texttt{QoM}_k$ and is
favored by a smaller remaining payload $\widehat S_k$. A UAV with
$\texttt{QoM}_k=0$ receives no power, while the cap prevents additional
allocation after its upload is complete. Thus, QoM changes both UAV activation
and power-allocation priority beyond channel-only water filling.

\section{Simulation Results}\label{section5}

We implement the proposed methods in Python using CARLA \cite{carla}.
We develop OpenMAMS for the design and evaluation of LAQA systems.
OpenMAMS is an open-source multi-drone simulation platform\footnote{\url{https://github.com/SIAT-INVS/OpenMAMS}} that supports: (1) multi-agent multi-modal dataset generation, including timestamps, high-fidelity images, 6D poses, and lidar point cloud data; (2) memory generation for image data, including VLM captioning, text embedding, and vector database building;
(3) question answering framework, including LLM inference, chain-of-thought reasoning, and interactive interface;
(4) optimization algorithms, including PSO/L2M optimizers and eight other baselines.
The OpenMAMS platform is deployed on a Linux workstation with an NVIDIA RTX 5090 GPU.
Unless otherwise specified, we use \texttt{Qwen3-VL-8B} for image captioning, \texttt{Mxbai-Embed-Large-v1} for text embedding, \texttt{Milvus} for top-5 memory retrieval, and \texttt{Qwen3-8B} for question answering. We report the sum GAE score $\sum_{k=1}^{K}\texttt{Score}_k x_k$ and the sum QoM $\sum_{k=1}^{K}\texttt{QoM}_k x_k$, and compare the following methods
under the same setting within each run:
\begin{itemize}
   \item \textbf{ComCen} \cite{ye2025integrated}: maximizes the sum rate.\footnote{We adopt the objective function and power constraint in \cite{ye2025integrated} and omit its trajectory and sensing constraints.}
   \item \textbf{SenCen} \cite{liu2024coverage}: maximizes sensing coverage.
   \item \textbf{FairCen} \cite{zheng2016wireless}: maximizes the minimum rate.
   \item \textbf{Greedy} \cite{li2024survey}: selects UAVs by GAE score without joint channel optimization.
   \item \textbf{Remember} \cite{anwar2025remembr}: retrieves only from $\mathcal M_0$.
   \item \textbf{SemCom} \cite{liu2025intelligent}: ranks memories by embedding similarity.
   \item \textbf{MemCen+PSO}: solves MemCen by penalty successive optimization.
   \item \textbf{MemCen+L2M}: predicts MemCen decisions with the learned L2M surrogate.
   \item \textbf{MemCen+Relax-Round}: relaxes and then rounds the selection variables.
   \item \textbf{MemCen+DQN}: uses a deep Q-learning neural network with a feasibility mask.
\end{itemize}

\begin{figure*}[!t]
    \centering
    \begin{minipage}[t]{0.56\textwidth}
        \centering
        \begin{minipage}[c][2.25in][c]{\linewidth}
            \centering
            \includegraphics[width=\linewidth]{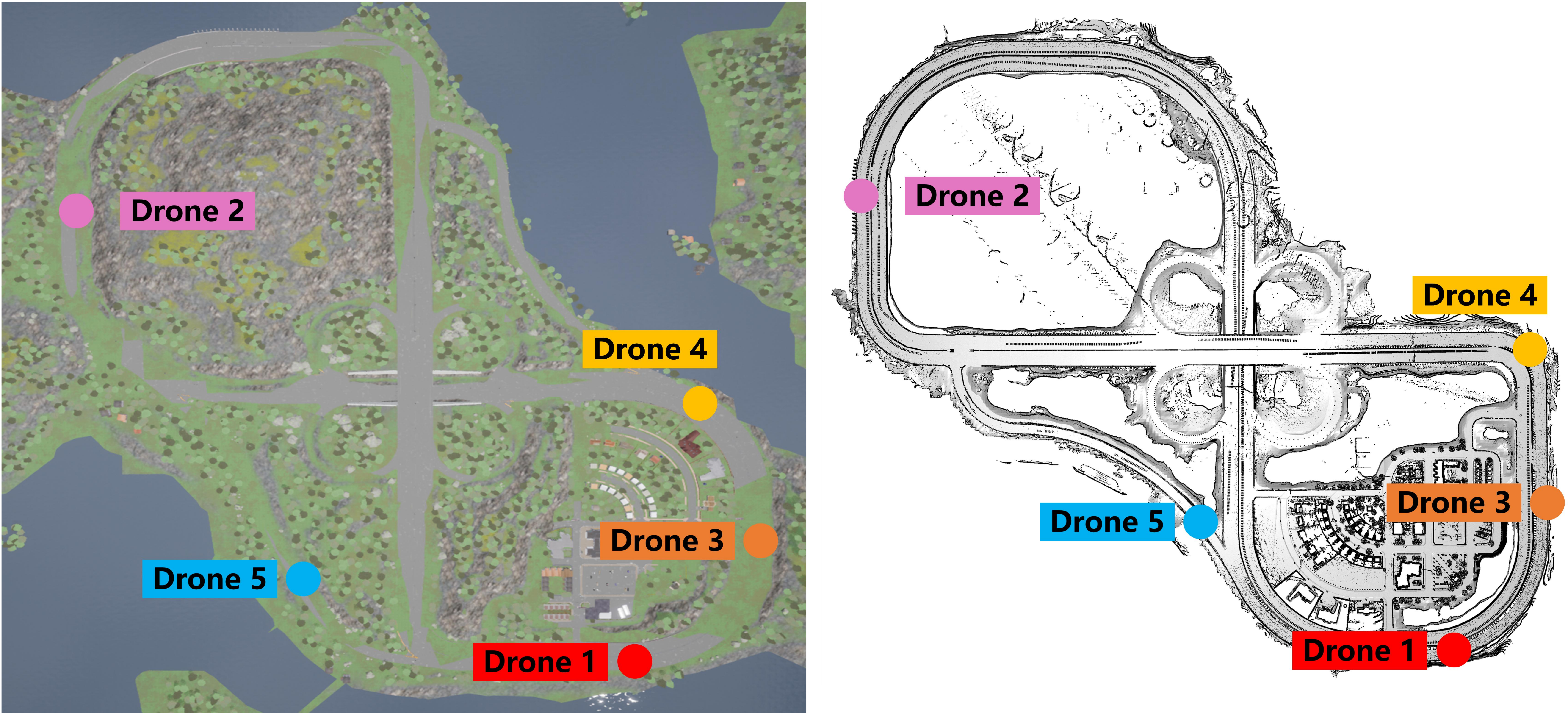}
        \end{minipage}
        \vspace{-0.15in}
        \caption{Simulation settings for the $5$-UAV Town04 scenario.}
        \label{fig:fig4}
    \end{minipage}\hfill
    \begin{minipage}[t]{0.40\textwidth}
        \centering
        \begin{minipage}[c][2.25in][c]{\linewidth}
            \centering
            \begin{subfigure}{\linewidth}
                \centering
                \includegraphics[width=\linewidth]{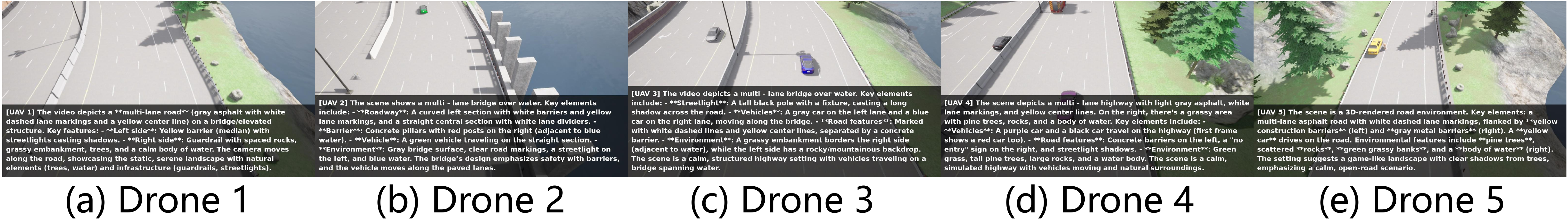}
                \caption{Visualization of five UAV frames and captions.}
            \end{subfigure}
            \par\vspace{0.2in}
            \begin{subfigure}{\linewidth}
                \centering
                \includegraphics[width=\linewidth]{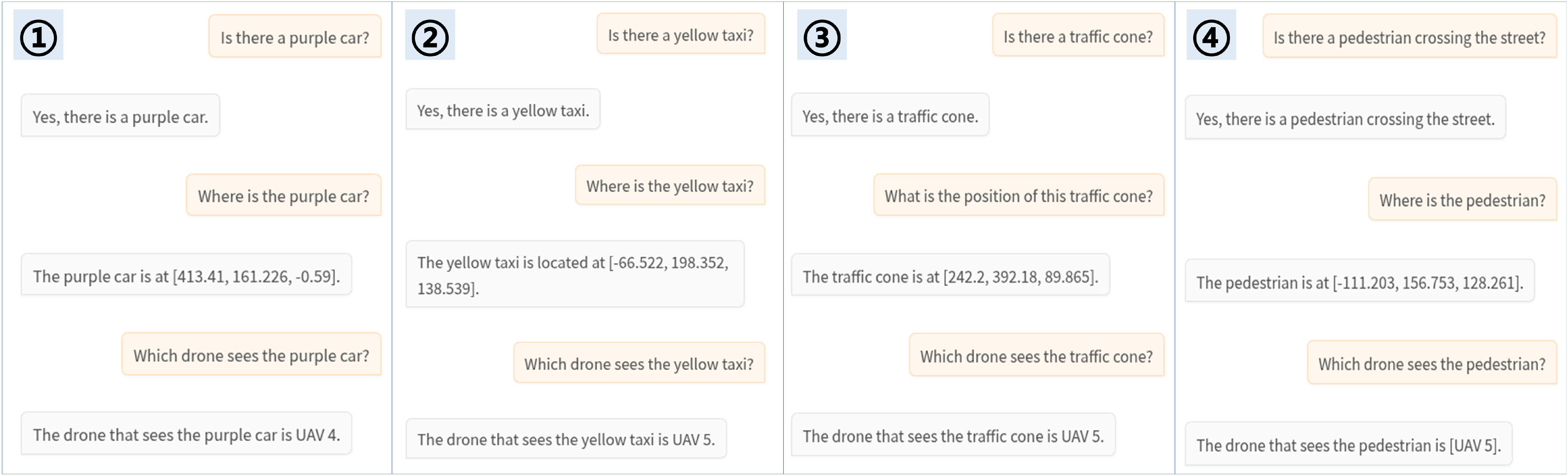}
                \caption{Visualization of LAQA.}
            \end{subfigure}
        \end{minipage}
        \vspace{-0.15in}
        \caption{Captioning and QA in the $5$-UAV scenario.}
        \label{fig:fig5}
    \end{minipage}
\end{figure*}

\begin{table*}[t] 
  \centering
  \caption{Evaluation of GAE. GAE scores are mean unanswered-question counts for five-question exams. \colorbox{purple2}{Purple} denotes the \textbf{best} memory selected by the method. \textbf{Bold} denotes the \textbf{highest} value.}
  {
  \resizebox{0.9\textwidth}{!}{%
  \begin{tabular}{ccccccc}
    \toprule
    \multirow{2}{*}{UAV ID} 
    & SemCom 
    & \multicolumn{2}{c}{\texttt{Qwen3-8B} ($6.43$\,s per QA)}
    & \multicolumn{2}{c}{\texttt{Qwen3-14B} ($12.17$\,s per QA)} 
    & \texttt{Qwen3-8B} \\
    \cmidrule(lr){2-2} \cmidrule(lr){3-4} \cmidrule(lr){5-6} \cmidrule(lr){7-7} 
    & \cite{liu2025intelligent} & GAE Score & GAE QA Accuracy 
    & GAE Score & GAE QA Accuracy & Downstream QA Accuracy \\
    \midrule
    UAV 01 & 0.8913 & 1.05 & \maxval{\textbf{79\%}}  & 0.60 & 88\%  & 39\% \\
    UAV 02 & \cellcolor{purple2}0.8664 & 2.75 & 45\%   & 1.65 & 67\%  & 48\% \\
    UAV 03 & 0.8746 & 3.25 & 35\%  & 1.20 & 76\%  & 59\% \\
    UAV 04 & 0.9009 & \cellcolor{purple2} \maxval{\textbf{3.65}} & \cellcolor{purple2}27\%  & \cellcolor{purple2} \maxval{\textbf{2.25}} & \cellcolor{purple2}55\%  & \cellcolor{purple2}\maxval{\textbf{70\%}} \\
    UAV 05 & \maxval{\textbf{0.9394}} & 1.40 & 72\%  & 0.20 & \maxval{\textbf{96\%}}  & 38\% \\
    \bottomrule
  \end{tabular}
  }
  }
  \label{tab:tab1}
\end{table*}

\subsection{Town04 Evaluation}

We consider the Town04 map in CARLA. 
We randomly generate $10$ objects (i.e., [green car, blue car, white car, purple car, black car, fire truck, motorcycle, bus, taxi, traffic cone]) on the road. The questions cover $3$ types, with $30$ questions in each realization. 
For a fire truck, the questions are:
\begin{itemize}
    \item[1)] Is there a fire truck? The answer should be YES/NO.
    \item[2)] Where is the fire truck? The answer should be an explicit [x,y] coordinate that is within $50$\,m from the ground-truth object location. 
    \item[3)] Which UAV sees the fire truck? The answer should be the UAV ID $k$.
\end{itemize}
The inspection UAVs are randomly distributed using the default CARLA map spawn points. 
Each UAV conducts inspections within a $500$-meter range starting from its spawn point. 
Each frame has a measured data volume of $V_{i,k}=1600$\,kbits (i.e., $200$\,kB). 
The frame rate is set to $35$\,FPS and $1050$ frames are recorded for each UAV, with a total of $1050K$ frames generated in a single run. 
The GAE exam uses $L_k\in\{5,10\}$ generated questions. The pilot sampling ratio is
$\rho_k\approx1\%$, implemented by sampling $10$ images per UAV.

For the air-to-ground communication link, the system bandwidth is set to $B=10\,\mathrm{MHz}$ and the noise power to $\sigma^2=-70\,\mathrm{dBm}$ \cite{wang2020angle}.
The wireless collection budget is $T=600\,\mathrm{s}$, i.e., 10 minutes.
The total transmit-power budget is set to $P_{\mathrm{sum}}=300\,\mathrm{mW}$, while the maximum per-UAV transmit power is $P_{\mathrm{max}}=100\,\mathrm{mW}$.
The ground server is located at the map center $(0,0)$ at a height of $20\,\mathrm{m}$, and the UAV-to-server distance is sampled as $d_k\sim\mathcal{U}[50,250]\,\mathrm{m}$.
We assume that UAV inspection and wireless transmission are performed in separate phases.
For dense-building scenarios, the air-to-ground channel follows a Rayleigh fading model,
$\mathbf{h}_k = \sqrt{h_0 \omega_k d_k^{-\alpha}}\mathbf{h}^{\text{CN}}_{k}$, where $h_0=-30\,\mathrm{dB}$ denotes the reference path loss at $1\,\mathrm{m}$, $\omega_k=-20\,\mathrm{dB}$ denotes the shadowing factor of UAV $k$, and $\alpha=3$ is the path-loss exponent.
The small-scale fading component satisfies
$\mathbf{h}_k^{\mathrm{CN}}\sim\mathcal{CN}(\mathbf{0},\mathbf{I}_N)$, with $N=256$ antennas at the ground server.
MRC is adopted unless otherwise stated.
Unless otherwise specified, quantitative results are averaged over 20 independent simulation runs, with independently generated object locations and wireless channels in each run.

\begin{figure*}[t]
    \centering
    \includegraphics[width=1\textwidth]{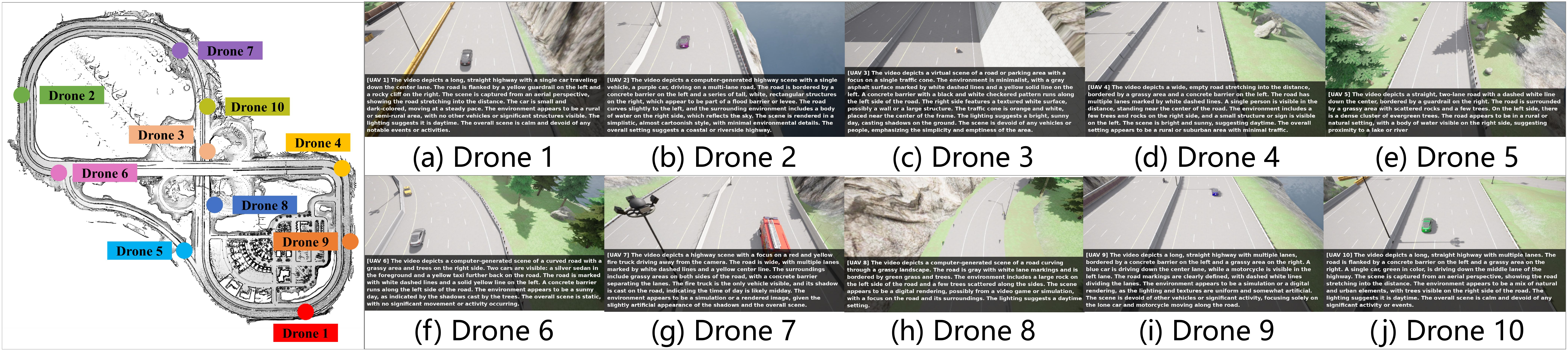}
    \caption{Visualization of ten UAV configurations, image frames, and the associated captions.}
    \label{fig:fig6}
\end{figure*}

\subsection{Validation of the GAE}\label{section5-1}

We first test whether GAE predicts downstream memory utility.
We consider the case of $K=5$ and the UAV spawn points are illustrated in Fig.~\ref{fig:fig4}. 
To simulate the heterogeneous memories, we place $[0,1,2,3,4]$ abnormal objects/events in inspection regions of UAVs $[1,2,3,4,5]$, and the VLM captioning results are illustrated in Fig. \ref{fig:fig5}(a). 
The captions show accurate scene understanding with the VLM \texttt{Qwen3-VL-8B}.

We set the historical memory to $\mathcal{M}_0=\mathcal{M}_5$ and evaluate the candidate memories of UAVs 1--5 over 20 random runs.
In each run, $L_k=5$ questions are generated from 10 randomly sampled images for each UAV.
Table~\ref{tab:tab1} reports the average GAE scores and QA accuracies obtained with \texttt{Qwen3-8B} and \texttt{Qwen3-14B}.
Both SemCom and GAE correctly identify the high similarity between the sampled memory of UAV~5 and the historical memory $\mathcal{M}_0$.
However, SemCom provides similar scores for UAVs 1--4 and therefore has limited ability to distinguish their relative memory values.
In contrast, 
under \texttt{Qwen3-8B}, the GAE scores progressively increase from UAV~1 to UAV~4, consistent with the increasing semantic richness of their observations.
We further observe that \texttt{Qwen3-14B} achieves consistently higher QA accuracy than \texttt{Qwen3-8B}. Nevertheless, both models clearly differentiate heterogeneous memory qualities.

We then evaluate downstream QA using each memory union
$\mathcal{M}_k\cup\mathcal{M}_0$ with \texttt{Qwen3-8B}.
GAE selects UAV~4 under both answering models, yielding 70\% downstream
QA accuracy. SemCom selects UAV~2, yielding 48\%. The comparison supports
using GAE to select useful additional memory in this scenario.
The QA results for memory $\mathcal{M}_4\cup\mathcal{M}_0$ are shown in Fig.~\ref{fig:fig5}(b). 
All answers about the presence, position, and reporting UAVs of the purple car, yellow taxi, traffic cone, and pedestrian are correct. 

\subsection{Evaluation of PSO and L2M}

\begin{figure*}[!t]
    \centering
    \begin{subfigure}{0.196\linewidth}
        \centering
        \includegraphics[width=\linewidth]{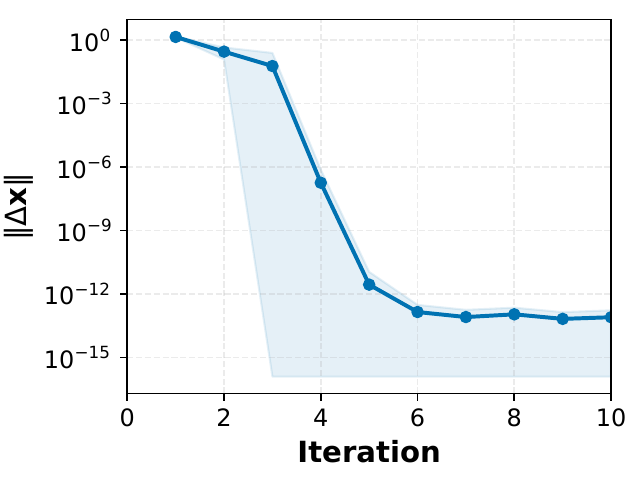}
        \caption{$\|\Delta\mathbf{x}\|$ versus $n$.}
    \end{subfigure}%
    \begin{subfigure}{0.196\linewidth}
        \centering
        \includegraphics[width=\linewidth]{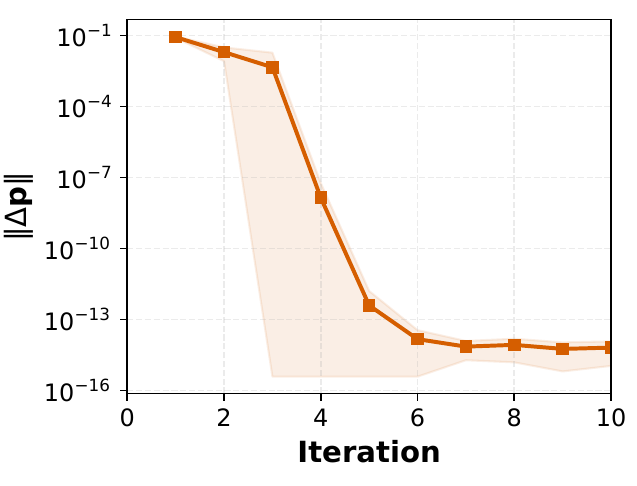}
        \caption{$\|\Delta\mathbf{p}\|$ versus $n$.}
    \end{subfigure}%
    \begin{subfigure}{0.196\linewidth}
        \centering
        \includegraphics[width=\linewidth]{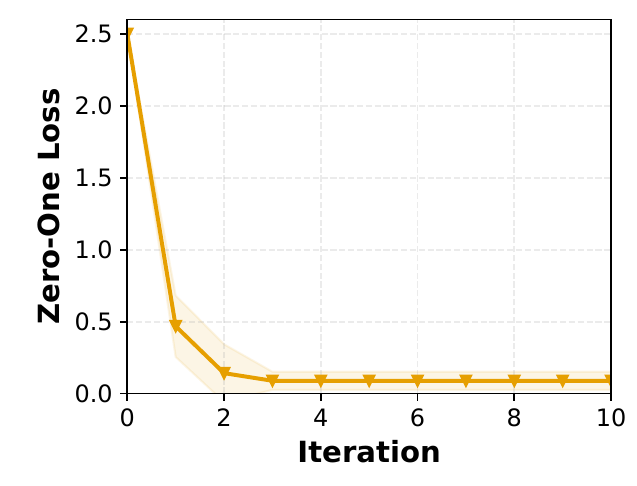}
        \caption{Zero-one loss versus $n$.}
    \end{subfigure}%
    \begin{subfigure}{0.196\linewidth}
        \centering
        \includegraphics[width=\linewidth]{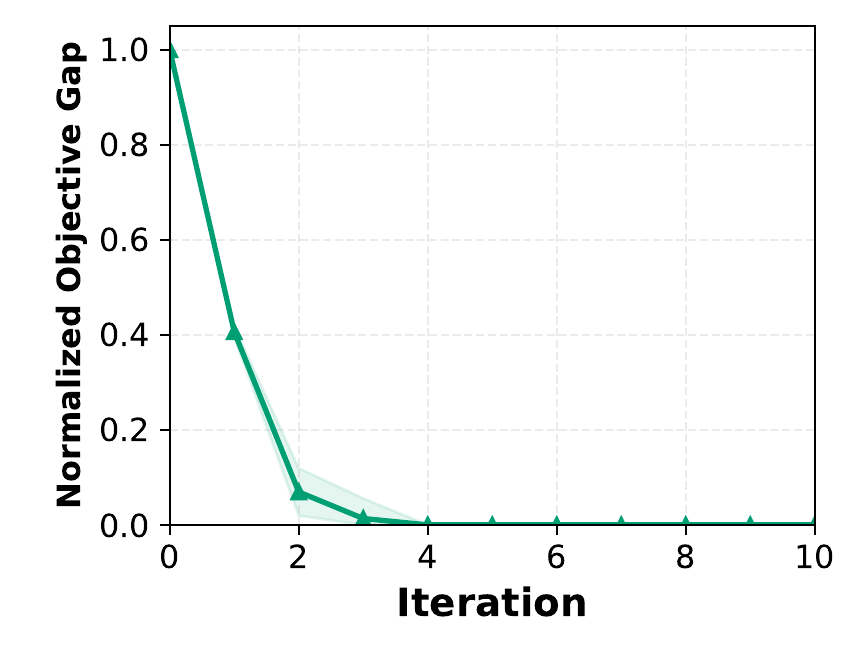}
        \caption{Objective gap.}
    \end{subfigure}%
    \begin{subfigure}{0.196\linewidth}
        \centering
        \includegraphics[width=\linewidth]{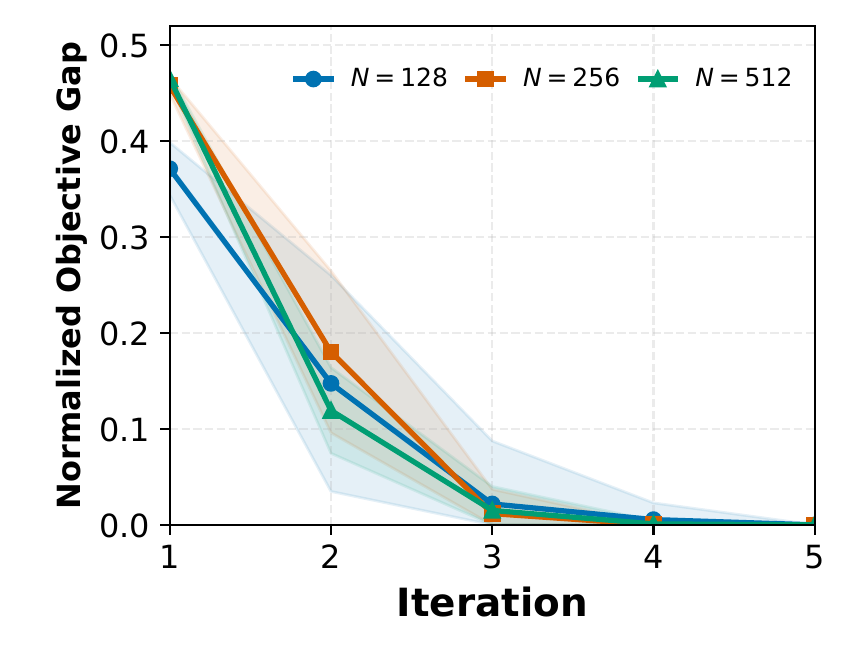}
        \caption{Different $N$.}
    \end{subfigure}%
    \caption{Convergence analysis of PSO.}
    \label{fig:fig7}
    \vspace{-0.1in}
\end{figure*}

\begin{table*}[!t]
    \centering
    \caption{{Comparison of solvers under the same MemCen formulation.}}
    \label{tab:tab2}
    \setlength{\tabcolsep}{3.2pt}
    \renewcommand{\arraystretch}{1.12}
    \footnotesize
    {
    \begin{tabular*}{0.9\textwidth}{@{\extracolsep{\fill}}lcccc@{}}
        \toprule
        Method & {$\uparrow$Sum GAE} & UAVs &
        {$\uparrow$Sum rate (Mbps)} & {$\downarrow$Time (s)} \\
        \midrule
        \textbf{MemCen+PSO} & \maxval{\textbf{$24.62\!\pm\!9.00$}} & \maxval{$3.50\!\pm\!1.41$} &
        $45.03\!\pm\!16.57$ & $4.60\!\pm\!0.40$ \\
        \textbf{MemCen+L2M} & $23.38\!\pm\!12.99$ & $3.10\!\pm\!1.69$ &
        $42.44\!\pm\!21.91$ & \textbf{$0.04\!\pm\!0.03$} \\
        MemCen+Relax-Round & $20.75\!\pm\!7.12$ & $3.20\!\pm\!1.20$ &
        $35.26\!\pm\!14.19$ & $9.16\!\pm\!0.43$ \\
        MemCen+DQN & $22.20\!\pm\!8.83$ & $2.93\!\pm\!1.14$ &
        \maxval{$47.81\!\pm\!18.67$} & \maxval{$0.017\!\pm\!0.0003$} \\
        \bottomrule
    \end{tabular*}}
\end{table*}

We examine the convergence of PSO with $P_{\mathrm{sum}}=200$\,mW and
$K=10$ as illustrated in Fig.~\ref{fig:fig6}. We define the consecutive-iteration changes as
$\Delta\mathbf{x}=\mathbf{x}^{[n]}-\mathbf{x}^{[n-1]}$ and
$\Delta\mathbf{p}=\mathbf{p}^{[n]}-\mathbf{p}^{[n-1]}$. As shown in
Figs.~\ref{fig:fig7}(a) and \ref{fig:fig7}(b), both
$\|\Delta\mathbf{x}\|$ and $\|\Delta\mathbf{p}\|$ fall below $10^{-8}$
after five iterations. Fig.~\ref{fig:fig7}(c) reports the total zero-one
residual $\phi(\mathbf{x})=\sum_{k=1}^{K}x_k(1-x_k)$.
The residual falls below $0.1$ within five iterations.
Fig.~\ref{fig:fig7}(d) shows that the normalized
objective gap of $\mathsf{P}2$ approaches zero within five iterations.
We examine the sensitivity of binary feasibility to the penalty parameter over ten channel realizations.
For $\beta\in\{0.01,0.05,0.1,0.5,1,2,5\}$, the final mean total zero-one residual ranges from $0.073$ to $0.080$ when $\beta\leq0.5$, while it exceeds $0.1$ for $\beta\geq1$.
Thus, all tested values with $\beta\leq0.5$ satisfy the near-binary criterion of $0.1$ for $K=10$.
We select $\beta=0.5$, which yields a residual of $0.080$ and represents the upper end of the range that maintains near-binary feasibility.

We further investigate whether the number of server antennas affects PSO convergence.
We vary $N\in\{128,256,512\}$ under a total power budget of $P_{\mathrm{sum}}=200$~mW and average the results over ten random channel realizations.
All runs are initialized with $x_k^{[0]}=0.5$ and $p_k^{[0]}=P_{\mathrm{sum}}/K$.
As shown in Fig.~\ref{fig:fig7}(e), the mean normalized objective gap approaches zero within five iterations for all three antenna configurations.
From iterations 4 to 5, the relative objective changes are only $0.53\%$, $0.009\%$, and $0.10\%$ for $N=128$, $256$, and $512$, respectively.
Although increasing $N$ improves the attained objective value, it does not materially slow PSO convergence, since the dimensions of the optimization variables $\mathbf{x}$ and $\mathbf{p}$ remain unchanged.

We then benchmark the solver performance under the same MemCen objective and power budget.
Table~\ref{tab:tab2} reports the results over 20 random IRC channel realizations.
PSO achieves the highest average sum GAE score.
L2M retains 95.0\% of the PSO performance while reducing the average execution time from 4.60~s to 0.04~s, corresponding to a $115\times$ speedup.
In comparison, Relax-Round incurs a 15.7\% performance loss relative to PSO.
DQN also achieves a lower GAE score than L2M, as it learns solely from reward feedback without access to expert demonstrations generated by PSO.
These results demonstrate that PSO provides high-quality solutions to the proposed MemCen formulation, while L2M effectively amortizes the optimization cost for low-latency inference.

We further test L2M under UAV-density shifts. The fixed-capacity
model is trained only at $K=10$ and tested without fine-tuning at
$K\in\{5,10,15,20\}$. Across three model seeds,
it retains $100.00\%$, $99.80\%$, $98.93\%$, and $97.46\%$ of the
branch-and-bound optimal QoM, respectively. The selected-UAV counts closely
match the optimum, and all decoded decisions satisfy the communications constraints.

\subsection{Comparison with Existing Benchmarks}

\begin{figure*}[!t]
    \centering
    \begin{subfigure}{0.235\linewidth}
        \centering
        \includegraphics[width=\linewidth]{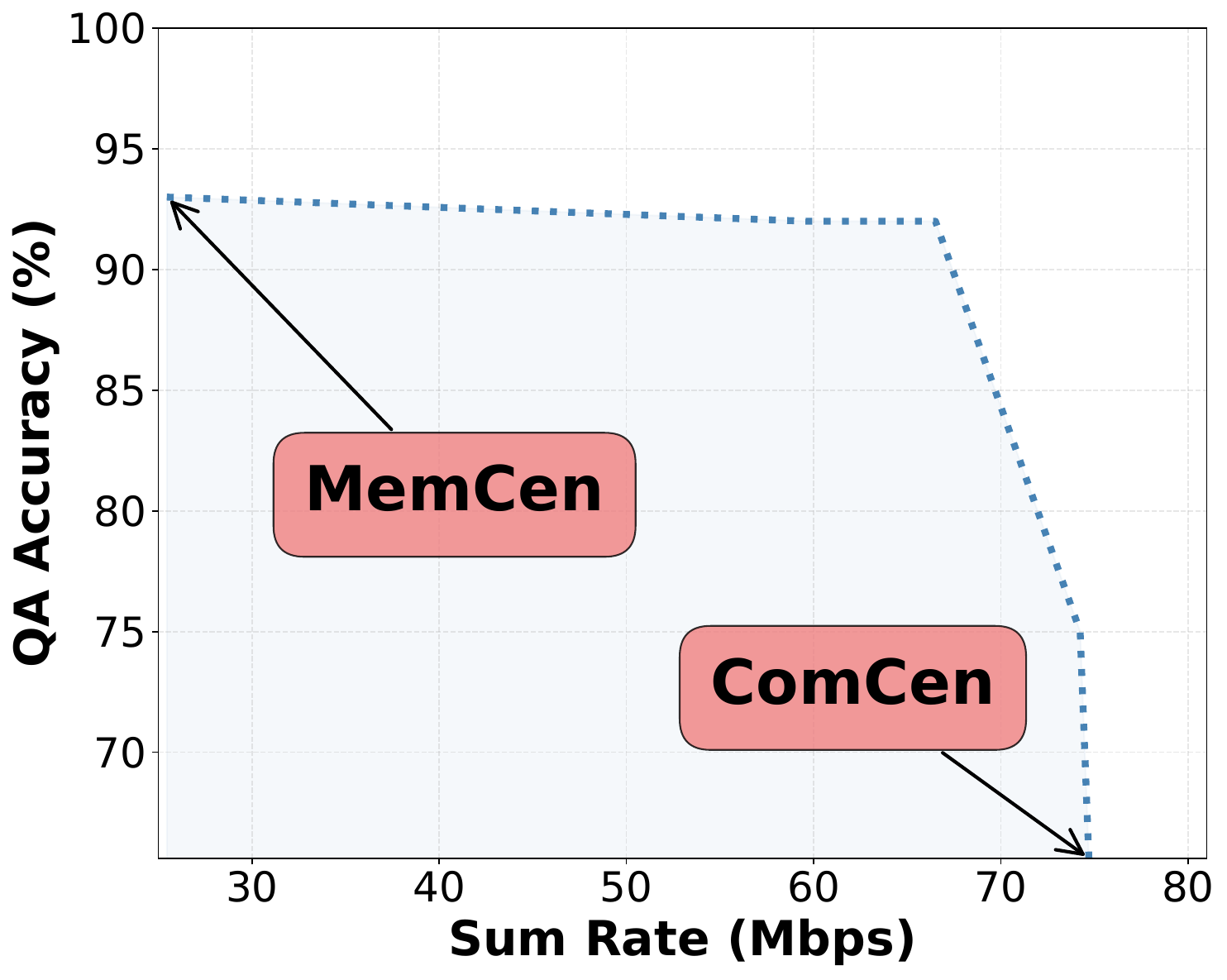}
        \caption{Accuracy--rate region.}
    \end{subfigure}\hfill
    \begin{subfigure}{0.235\linewidth}
        \centering
        \includegraphics[width=\linewidth]{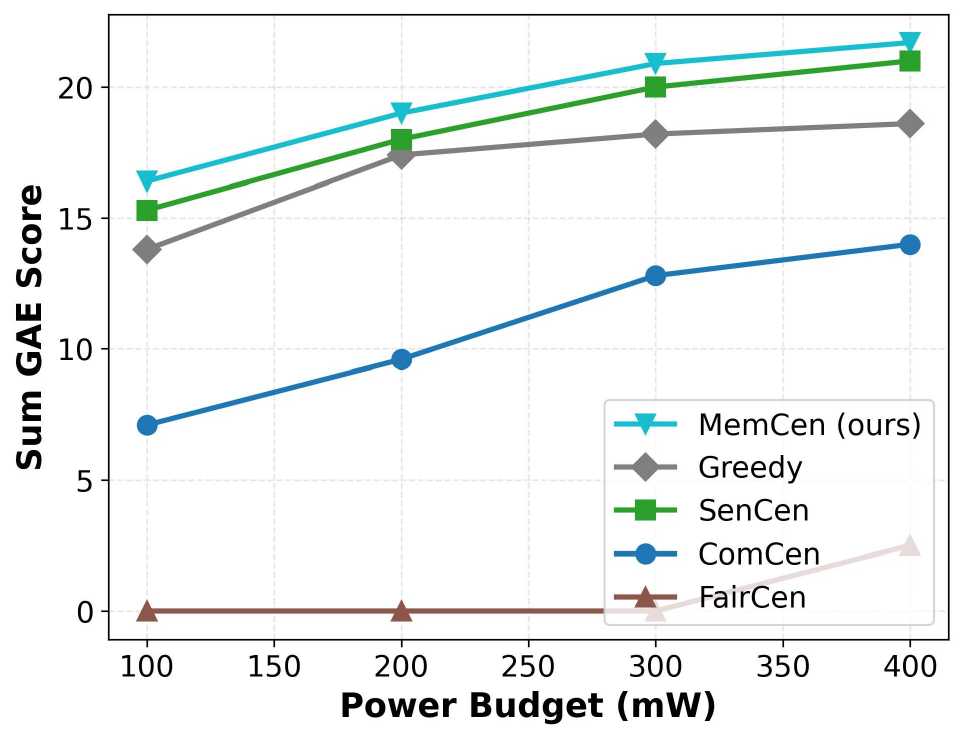}
        \caption{Sum GAE score.}
    \end{subfigure}\hfill
    \begin{subfigure}{0.235\linewidth}
        \centering
        \includegraphics[width=\linewidth]{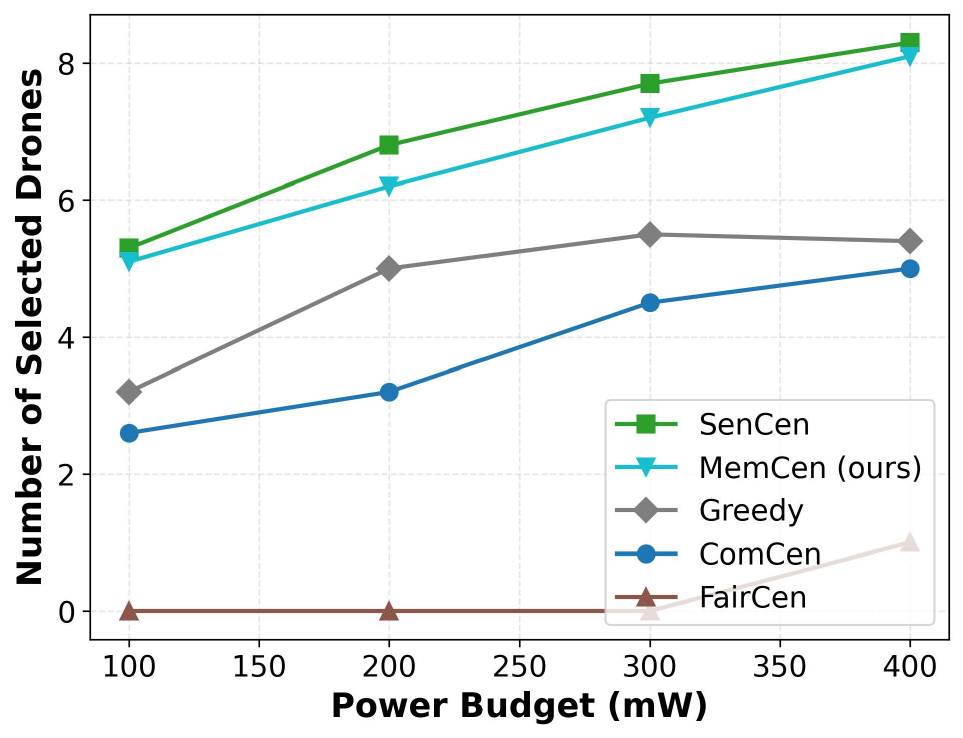}
        \caption{Selected UAVs.}
    \end{subfigure}\hfill
    \begin{subfigure}{0.235\linewidth}
        \centering
        \includegraphics[width=\linewidth]{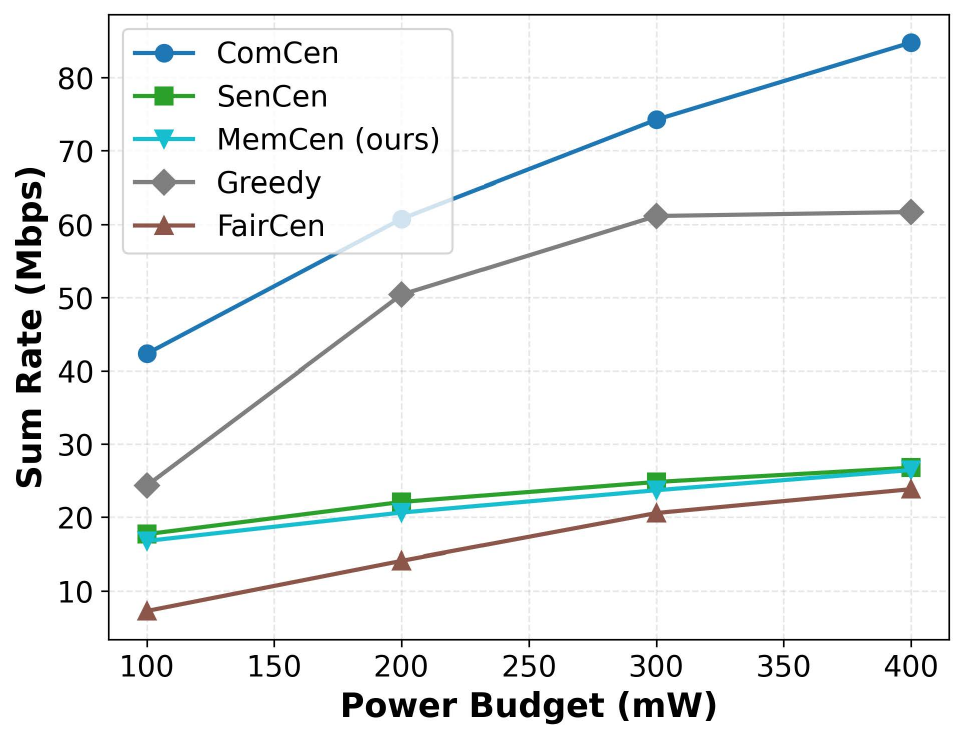}
        \caption{Sum rate.}
    \end{subfigure}
    \caption{Accuracy--rate tradeoff and power-budget sensitivity.}
    \label{fig:fig8}
    \vspace{-0.15in}
\end{figure*}

\begin{table}[!t] 
\small
\centering
\setlength{\tabcolsep}{1.5mm}
\renewcommand{\arraystretch}{1.35}
\caption{Quantitative results of QA tasks.}
\scalebox{0.8}{
\begin{tabular}{lc|cccc}
\toprule
\multicolumn{2}{c}{Metric} & QA Accuracy $\uparrow$ & Sum GAE Score $\uparrow$ & UAVs$\uparrow$ & Sum Rate$\uparrow$ \\
\midrule
ComCen & \cite{ye2025integrated} &65.6\% & 10.2 & 4.1 & \cellcolor{purple2}\maxval{74.70\,Mbps} \\
SenCen & \cite{liu2024coverage} & 88.6\% & 19.9 & \cellcolor{purple2} \maxval{8.1} & 25.48\,Mbps  \\
FairCen & \cite{zheng2016wireless} & 46.0\% & 2.5 & 1.0 & 21.46\,Mbps \\
Greedy & \cite{li2024survey}  & 89.8\% & 17.3 & 4.4 & 44.56\,Mbps \\
Remember & \cite{anwar2025remembr} & 41.0\% & -- & -- & --  \\
MemCen & (\textbf{Ours}) & \cellcolor{purple2}\maxval{\textbf{92.4\%}} & \cellcolor{purple2}\maxval{\textbf{20.7}} & \textbf{7.4} & 24.31\,Mbps  \\

\bottomrule
\end{tabular}
}
\label{tab:tab3}
\end{table}

\begin{table}[!t]
\centering
\caption{Evaluation of the $10$-UAV Case}
\label{tab:tab4}
\scalebox{0.72}{
\begin{tabular}{l c | c c c c}
    \toprule
    \multicolumn{2}{c}{Method}  & {$\uparrow$Sum GAE Score} & {$\uparrow$UAVs} & {$\uparrow$Sum Rate} & {$\uparrow$Min. Rate} \\
    \midrule
    ComCen & \cite{ye2025integrated}         & 12 & 4 & \cellcolor{purple2} \maxval{63.9\,Mbps} & 0\,Mbps \\
    SenCen & \cite{liu2024coverage}        & 16 & \cellcolor{purple2} \maxval{7} & 21.9 \,Mbps& 0\,Mbps \\
    FairCen & \cite{zheng2016wireless} & 0 & 0 & 17.2\,Mbps & \cellcolor{purple2} \maxval{1.72\,Mbps}  \\
    Greedy & \cite{li2024survey} & 16 & 3 & 33.5\,Mbps & 0\,Mbps\\
    MemCen & (Ours)          & \cellcolor{purple2} \maxval{\textbf{21 (+$31.25\%$)}}
                    & \textbf{6}
                    & \textbf{20.1}\,Mbps & 0\,Mbps \\
    \bottomrule
\end{tabular}
}
\end{table}

\begin{figure}[!t]
    \centering
    \includegraphics[width=0.49\textwidth]{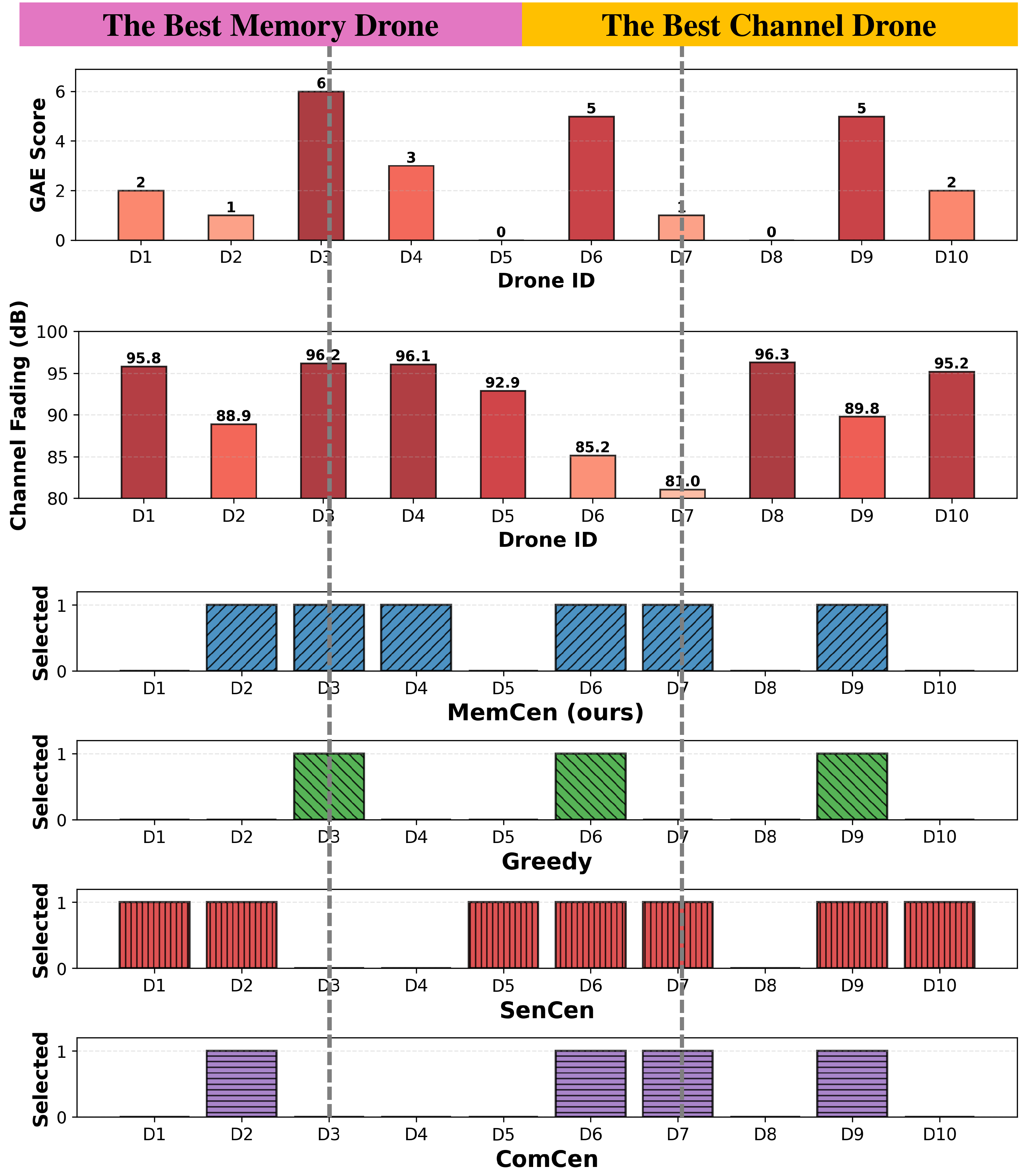}
    \caption{GAE scores, channel conditions, and UAV selections of different schemes.}
    \label{fig:fig9}
    \vspace{-0.2in}
\end{figure}

\begin{figure*}[!t]
	\centering
	\begin{subfigure}{0.22\linewidth}
		\centering
		\includegraphics[width=\linewidth]{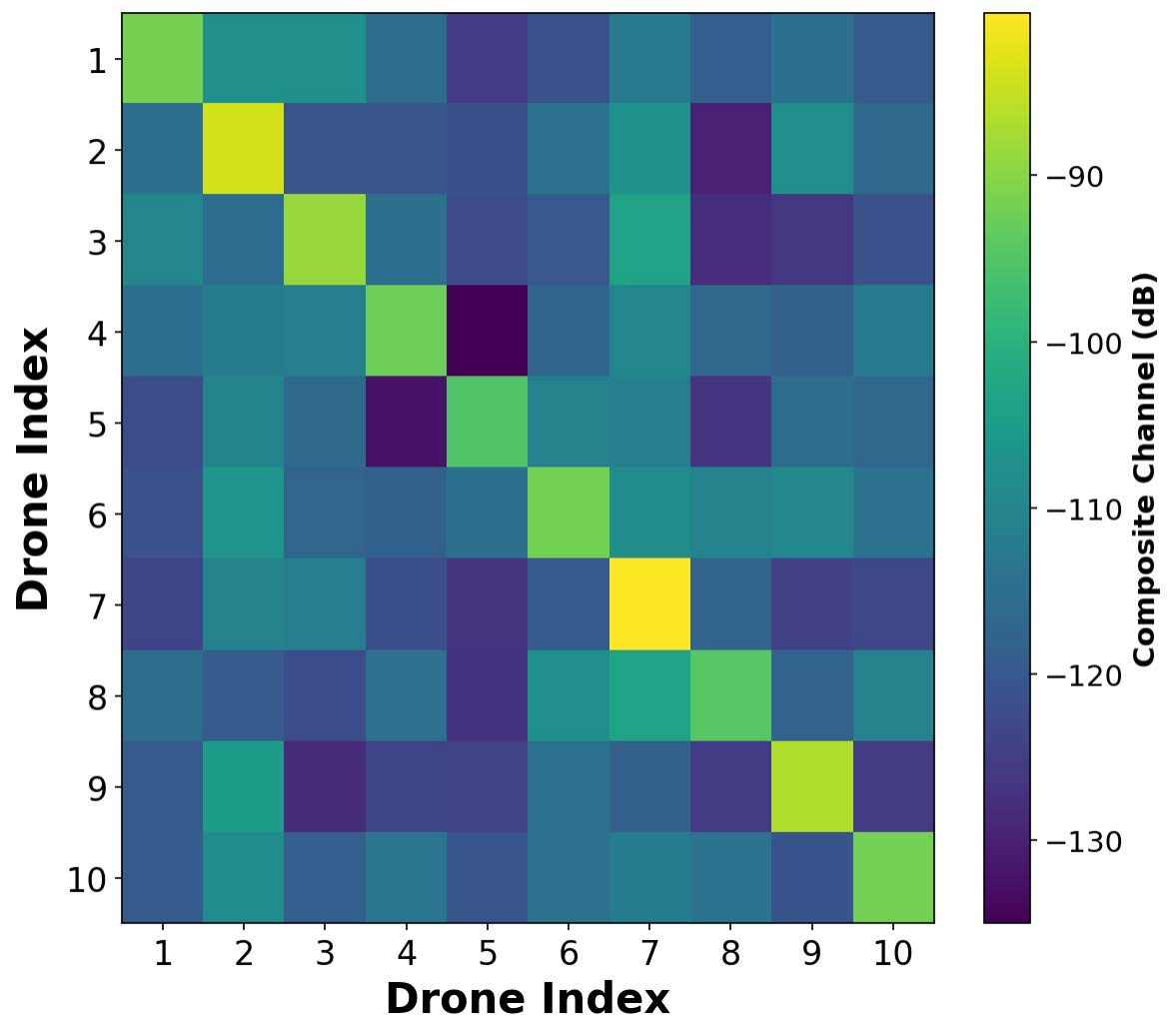}
		\caption{MRC channel map.}
	\end{subfigure}
 	\begin{subfigure}{0.22\linewidth}
		\centering
		\includegraphics[width=\linewidth]{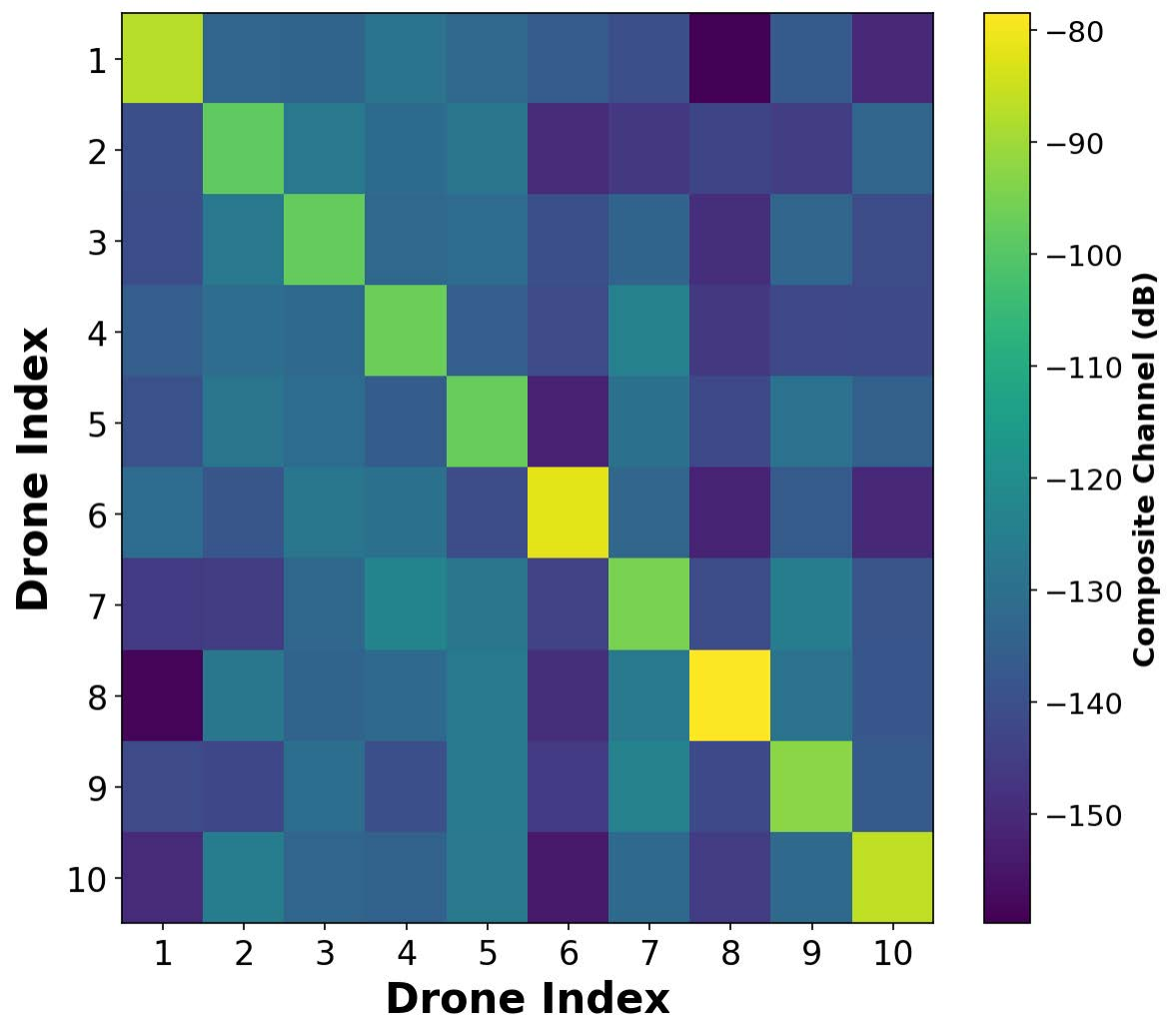}
		\caption{IRC channel map.}
	\end{subfigure}
	\begin{subfigure}{0.26\linewidth}
		\centering
		\includegraphics[width=1\linewidth]{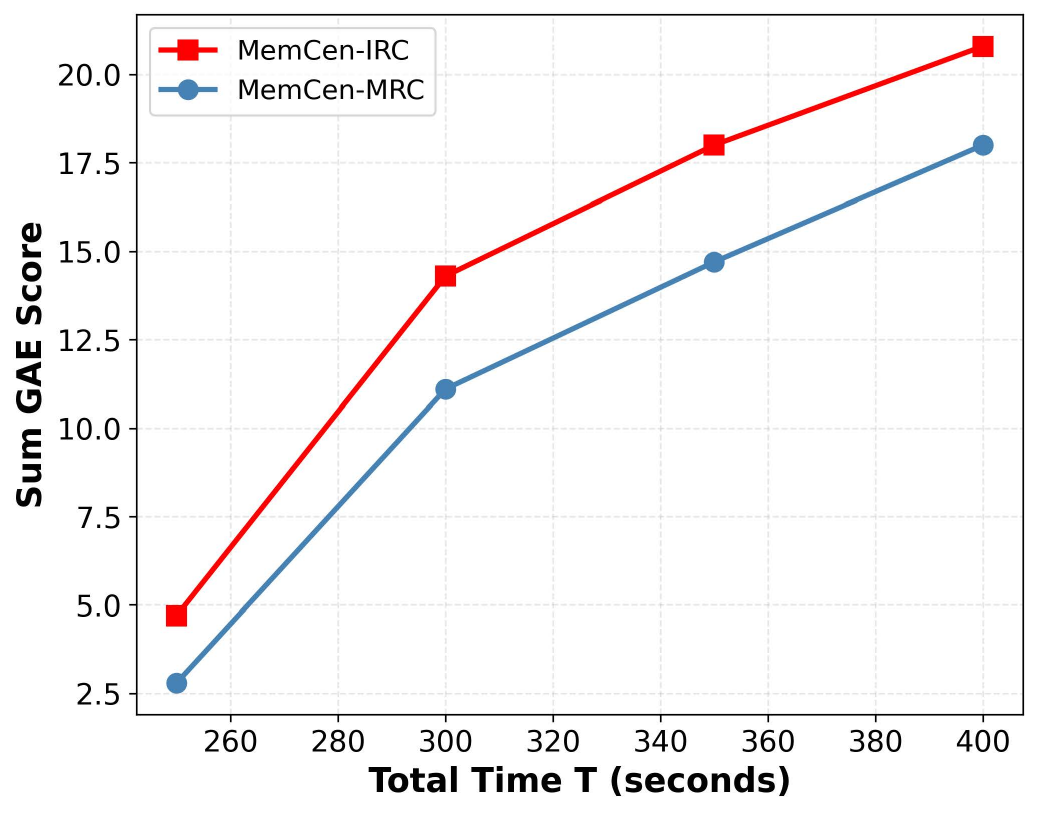}
		\caption{Sum GAE score versus $T$.}
	\end{subfigure}
     \begin{subfigure}{0.26\linewidth}
		\centering
		\includegraphics[width=1\linewidth]{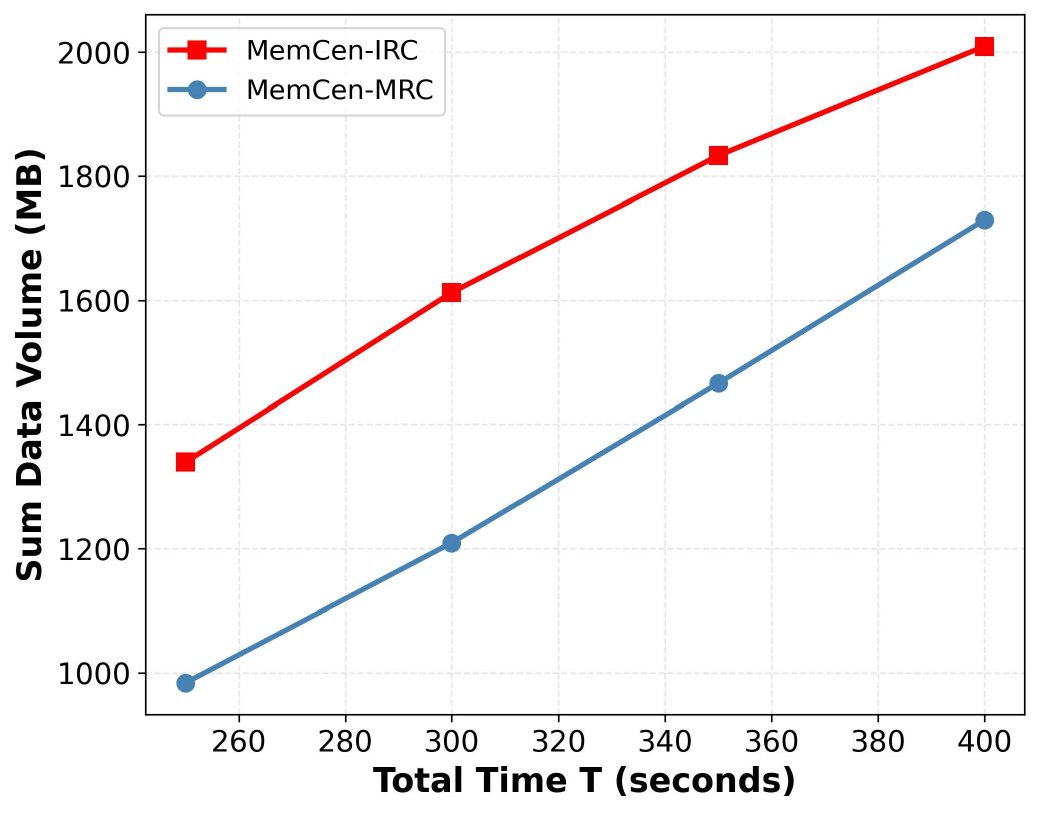}
		\caption{Data volume versus $T$.}
	\end{subfigure} 
    \caption{Comparison between MRC and IRC.}
	\label{fig:fig10}
\end{figure*}

We next evaluate the complete MemCen framework against the baselines in Town04.
We consider a scenario with $K=10$ UAVs and a total power budget of $P_{\mathrm{sum}}=300$\,mW, as illustrated in Fig.~\ref{fig:fig6}.
Among the 10 UAVs, five are randomly selected to construct the initial ground memory $\mathcal{M}_0$.
For GAE, we generate $L_k=10$ questions for each UAV from 10 randomly sampled images.
The quantitative results are reported in Table~\ref{tab:tab3}.

MemCen achieves the highest QA accuracy of 92.4\% and the highest GAE score of 20.7.
Compared with the standalone-memory baseline Remember \cite{anwar2025remembr}, MemCen improves QA accuracy by more than 50\%, highlighting the importance of memory aggregation for city-scale QA.
Although ComCen achieves the highest communication rate, its QA accuracy is 26.8 percentage points lower than that of MemCen.
This result shows that higher communication throughput does not necessarily translate into greater memory utility, particularly when nearby UAVs upload redundant observations that contribute little new knowledge to the ground server.
Similarly, SenCen performs worse despite achieving the largest sensing coverage in Town04, i.e., $8.1\times$ unit inspection areas.
This is because the value of additional sensing depends not only on coverage, but also on the information density and freshness of the observed regions.
Together, these results demonstrate that MemCen effectively identifies UAV memories that provide greater downstream QA utility.
Among the baselines, Greedy achieves the performance closest to MemCen.
Greedy prioritizes UAVs with larger GAE scores but does not jointly account for their communication costs and delivery feasibility.
As a result, it may allocate excessive resources to a few high-scoring UAVs, reducing the number of successfully connected UAVs and making the resulting memory aggregation less robust to discrepancies between GAE-estimated and actual memory utility.

Fig.~\ref{fig:fig8}(a) illustrates the accuracy--rate tradeoff in the ten-UAV Town04 scenario.
The resulting boundary reveals the key cross-layer tradeoff between communication-efficient memories delivered over strong links and high-value memories associated with weaker links.
The formulation $\mathsf{Q}$ makes this operating tradeoff explicit through the dimensionless preference parameter $\mu$.

Fig.~\ref{fig:fig8}(b)--(d) evaluates the impact of the total power budget by varying
$P_{\mathrm{sum}}\in\{100,200,300,400\}$\,mW.
MemCen consistently achieves the highest sum GAE score across all tested power budgets.
Its memory utility and number of admitted UAVs gradually saturate beyond $300$~mW, as most feasible high-value memories have already been acquired.
At $400$~mW, ComCen achieves nearly four times the sum rate of MemCen but does not attain comparable memory utility.
This result highlights that higher communication throughput does not necessarily translate into more task-relevant information.

Fig.~\ref{fig:fig9} and Table~\ref{tab:tab4} further examine a representative allocation with the initial memory set $\{1,2,5,7,10\}$.
ComCen prioritizes UAVs with strong communication links but may select memories with limited task utility.
SenCen admits more UAVs but excludes weak-link UAV~3 despite its rare task-relevant observations.
In contrast, MemCen selects UAV~3 while maintaining six successful memory uploads by jointly balancing memory utility and delivery cost.
Greedy considers memory utility alone, whereas FairCen fails to complete any full-memory upload under the considered constraints.
This case further demonstrates the importance of jointly optimizing memory utility, payload size, and communication cost.

We finally compare MRC and IRC at $\sigma^2=-90$\,dBm over
$T\in\{250,300,350,400\}$\,s.
Under this setting, the system operates in an interference-limited regime, where aggregate multi-UAV interference dominates the background noise.
As shown in Fig.~\ref{fig:fig10}(a)--(b), IRC preserves the desired-link gain while substantially suppressing off-diagonal interference leakage.
Consequently, IRC achieves higher memory utility and a larger successfully delivered data volume than MRC, as shown in Fig.~\ref{fig:fig10}(c)--(d).
These results support the operating conditions discussed in Remark~2 and motivate joint receiver, coding, and resource-allocation design as an interesting direction for future work.

\begin{figure*}[!t]
    \centering
    \begin{subfigure}[t]{0.57\textwidth}
        \centering
        \includegraphics[height=2.35in,keepaspectratio]{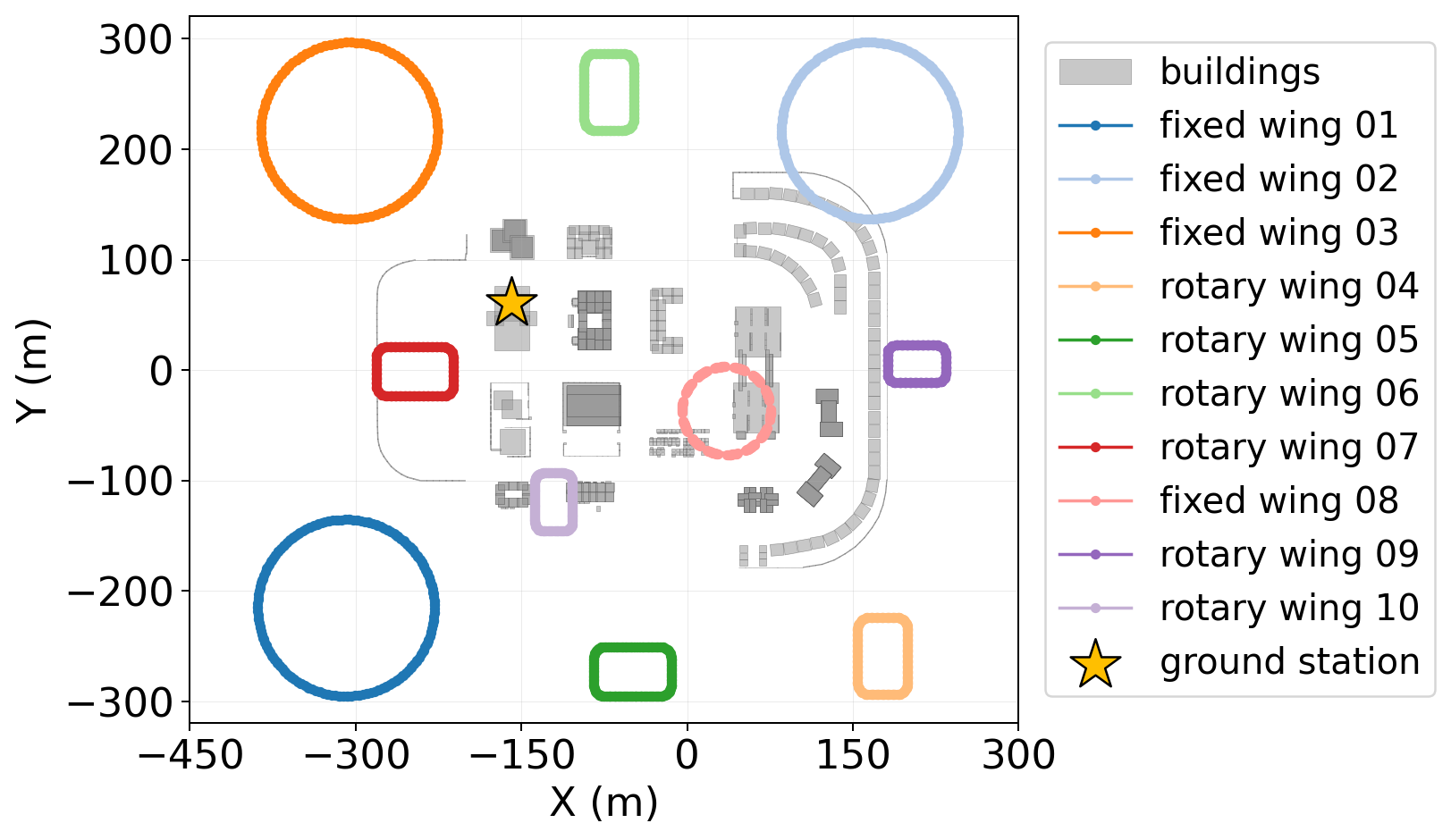}
        \caption{{Town05 geometry, ground station, and UAV trajectories.}}
    \end{subfigure}\hfill
    \begin{subfigure}[t]{0.42\textwidth}
        \centering
        \includegraphics[height=2.35in,keepaspectratio]{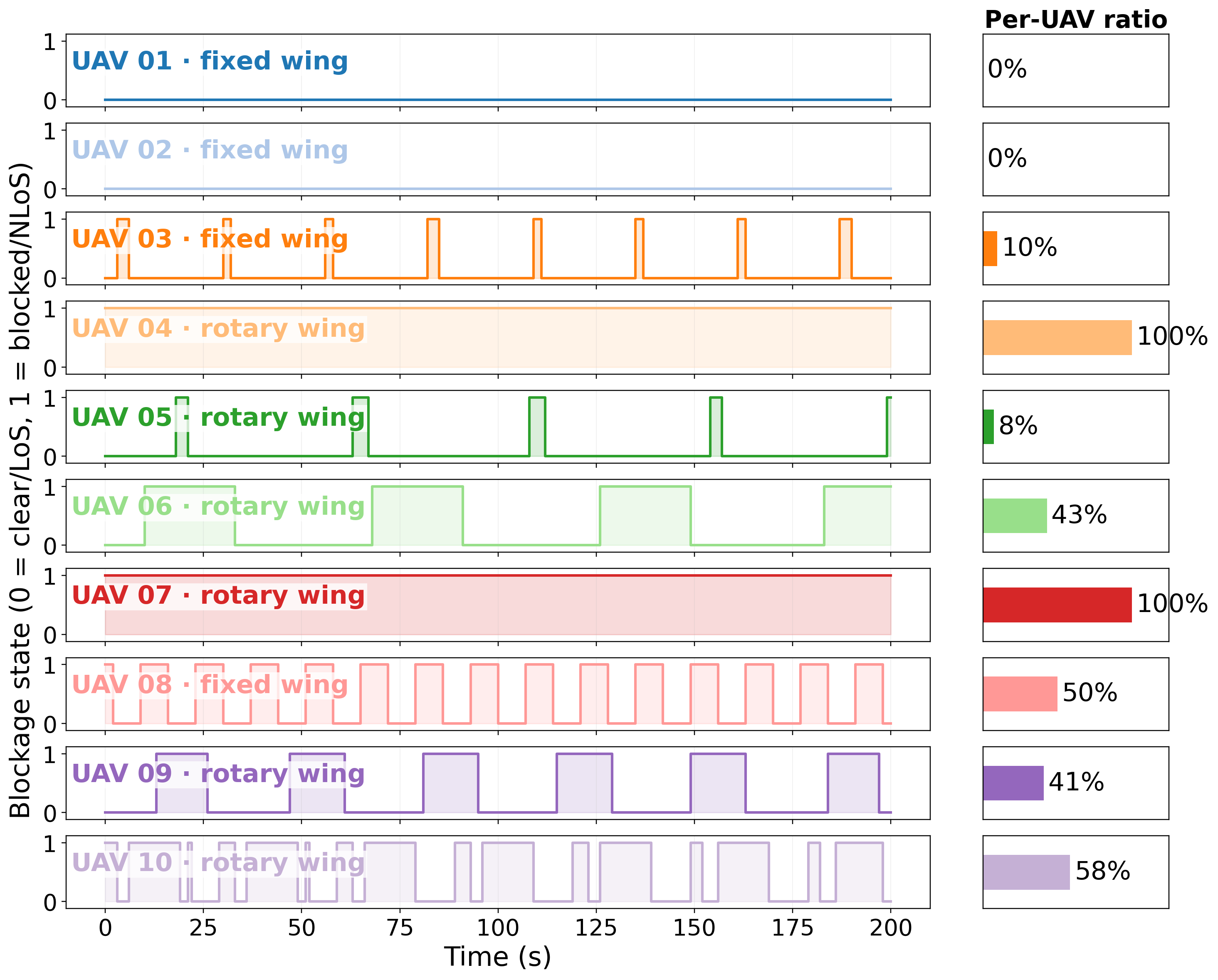}
        \caption{{Time-varying blockage across the ten UAV links.}}
    \end{subfigure}
    \par\medskip
    \begin{subfigure}[t]{0.245\textwidth}
        \includegraphics[width=\linewidth]{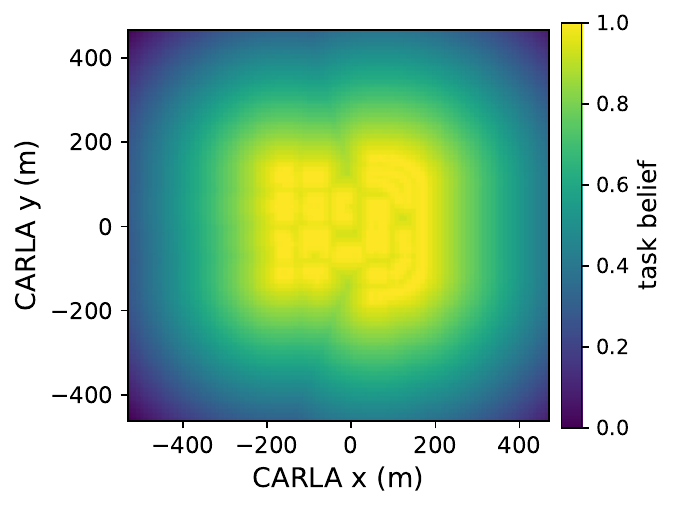}
        \caption{{Urban-inspection belief.}}
    \end{subfigure}\hfill
    \begin{subfigure}[t]{0.245\textwidth}
        \includegraphics[width=\linewidth]{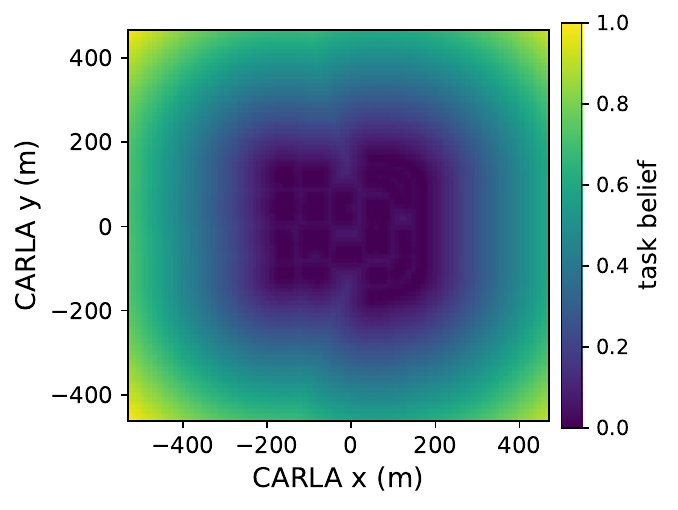}
        \caption{{Suburban-rescue belief.}}
    \end{subfigure}\hfill
    \begin{subfigure}[t]{0.245\textwidth}
        \includegraphics[width=\linewidth]{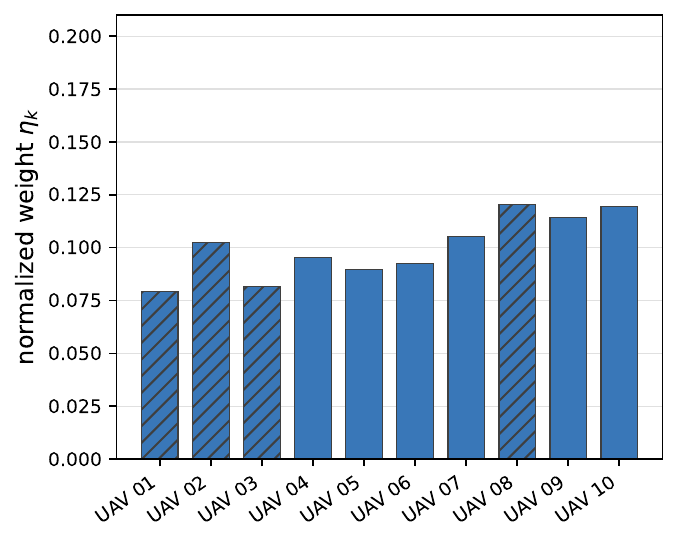}
        \caption{{Inspection route weights.}}
    \end{subfigure}\hfill
    \begin{subfigure}[t]{0.245\textwidth}
        \includegraphics[width=\linewidth]{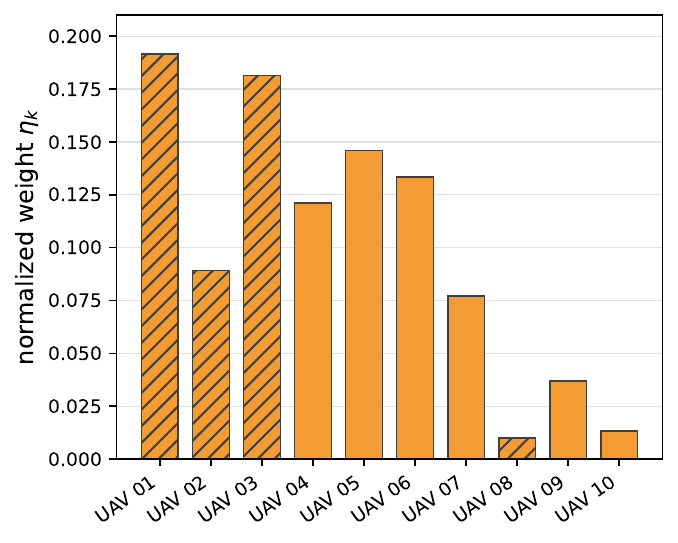}
        \caption{{Rescue route weights.}}
    \end{subfigure}
    \caption{{Dynamic Town05 setting.
    (a)--(b) show the trajectories and geometry-induced channel blockage. (c)--(f) show task switch on the same map and routes. Hatched and solid bars denote fixed-wing and multirotor profiles, respectively.}}
    \label{fig:fig11}
\end{figure*}

\begin{table*}[!t]
    \centering
    \caption{{Town05 results over 20 random trials.}}
    \label{tab:tab5}
    \setlength{\tabcolsep}{7pt}
    \renewcommand{\arraystretch}{1.08}
    \small
    {
    \begin{tabular}{lcccc}
        \toprule
        Method & $\uparrow$QA accuracy &
        $\uparrow$Sum QoM & Selected UAVs & $\uparrow$Sum rate (Mbps) \\
        \midrule
        Remember & $33.0\!\pm\!14.9\%$ & $0.000\!\pm\!0.000$ & $0.00\!\pm\!0.00$ & $0.00\!\pm\!0.00$ \\
        FairCen & $33.0\!\pm\!14.9\%$ & $0.000\!\pm\!0.000$ & $0.00\!\pm\!0.00$ & $31.06\!\pm\!0.29$ \\
        ComCen & $59.0\!\pm\!16.5\%$ & $1.505\!\pm\!0.544$ & $4.00\!\pm\!0.00$ & \maxval{\textbf{$369.10\!\pm\!0.29$}} \\
        SenCen & $71.0\!\pm\!15.2\%$ & $2.435\!\pm\!0.509$ & $7.00\!\pm\!0.00$ & $289.84\!\pm\!0.22$ \\
        SemCom & $79.0\!\pm\!16.5\%$ & $2.770\!\pm\!0.540$ & \maxval{$7.80\!\pm\!0.41$} & $230.18\!\pm\!25.07$ \\
        \textbf{MemCen} & \maxval{\textbf{$84.0\!\pm\!10.5\%$}} & \maxval{\textbf{$2.775\!\pm\!0.454$}} & $6.70\!\pm\!0.73$ & $285.91\!\pm\!0.29$ \\
        \bottomrule
    \end{tabular}}
\end{table*}

\begin{table*}[!t]
    \centering
    \caption{Sensitivity of MemCen to image loads in Town05. FW and MR denote
    fixed-wing and multirotor UAVs.}
    \label{tab:tab6}
    \setlength{\tabcolsep}{4.5pt}
    \renewcommand{\arraystretch}{1.08}
    \footnotesize
    \begin{tabular}{ccccccc}
        \toprule
        \makecell{FW frames\\per UAV} & \makecell{MR frames\\per UAV} &
        \makecell{Mean frame size\\FW/MR (MB)} &
        {$\uparrow$Sum rate (Mbps)} & Selected UAVs &
        $\uparrow$Sum QoM & {$\uparrow$QA accuracy} \\
        \midrule
        1,000 & 1,000 & 0.666/0.286 & $184.87\!\pm\!0.03$ & \maxval{$9.00\!\pm\!0.00$} & \maxval{$3.390\!\pm\!0.409$} & \maxval{$93.0\!\pm\!9.8\%$} \\
        1,500 & 1,000 & 0.667/0.286 & $244.66\!\pm\!4.14$ & $8.45\!\pm\!0.83$ & $3.185\!\pm\!0.627$ & $89.0\!\pm\!12.1\%$ \\
        2,000 & 1,000 & 0.666/0.286 & \maxval{$285.91\!\pm\!0.29$} & $6.70\!\pm\!0.73$ & $2.775\!\pm\!0.454$ & $84.0\!\pm\!10.5\%$ \\
        \bottomrule
    \end{tabular}
\end{table*}

\subsection{Dynamic and Heterogeneous Town05 Evaluation}

We next test whether MemCen remains effective when mobility,
blockage, and sensing workloads vary jointly. 
As shown in Fig.~\ref{fig:fig11}(a)--(b), we consider a 200-s search-and-rescue mission in CARLA Town05 with $K=10$ UAVs. 
Four fixed-wing UAVs capture images at a resolution of $1920\times1080$ pixels, whereas the six multirotor UAVs use $1280\times720$ pixels. The average image sizes are approximately $666.5$\,kB and $286.0$\,kB, respectively.
Their effective workloads are $2,000$ and $1,000$ frames per UAV, respectively. 
The server is placed on a rooftop, and the channel is computed using UAV trajectories. 
Town05 building geometry yields a 40.9\% blockage ratio, and each blocked link incurs an additional 20-dB attenuation.

MemCen operates at two time scales. Before flight, the server selects the
memory set using the task-conditioned QoM and the channel statistics. During flight, it adapts transmit power using instantaneous
CSI while keeping the selected set fixed. A task change triggers a new
slow-time-scale selection. We use $B=10$~MHz, $P_{\rm sum}=0.3$~W,
$P_{\max}=0.1$~W, and a noise power of $-100$~dBm. Results are averaged over
20 random trials. In each trial, three randomly sampled UAV memories form
$\mathcal M_0$, and five rescue targets are assigned to
five distinct suburban UAVs. The initial set, target assignment, and fading
realization vary across trials. For SemCom, 25 evenly spaced frames per UAV are
ranked by their cosine dissimilarity from $\mathcal M_0$.

Fig.~\ref{fig:fig11}(c)--(f) instantiates the belief-map
interface introduced in Section~\ref{section3}. Let
$B^{\rm res}(i,j)$ be the normalized distance from grid cell $(i,j)$ to its
nearest building and let $B^{\rm ins}(i,j)=1-B^{\rm res}(i,j)$. The former
emphasizes suburban rescue areas, whereas the latter emphasizes the urban
building core. We integrate the active map along each recorded route and
normalize the resulting ten scores to obtain $\eta_k$. Switching from urban
inspection to suburban rescue decreases the aggregate weight of the three
urban routes from 0.354 to 0.060 and shifts the prior toward the seven suburban
routes. This controlled analysis validates that the same scene and memory
interface can be reweighted without platform-specific retraining.

Table~\ref{tab:tab5} shows that MemCen achieves the highest QA accuracy of 84.0\% in Town05.
SenCen admits fewer UAVs than SemCom because some of its selected UAVs fail to complete their memory uploads under the dynamic channels.
In contrast, MemCen successfully completes all selected uploads and reduces the standard deviation of QA accuracy from 16.5\% to 10.5\%.
Its selected UAV set contains $1.70$ fixed-wing UAVs and $5.00$ multirotor UAVs on average.
This result shows that MemCen can jointly accommodate the two heterogeneous UAV platforms.
Overall, these results demonstrate robust memory selection and resource allocation under heterogeneous payloads, UAV mobility, and building blockage.

Table~\ref{tab:tab6} evaluates the impact of nonuniform image volume by increasing each fixed-wing workload from 1,000 to 2,000 frames while fixing each multirotor workload at 1,000 frames. 
As the fixed-wing workload increases, MemCen becomes more selective. The number of admitted UAVs decreases from 9.00 to 6.70, the aggregate QoM decreases from 3.390 to 2.775, and the QA accuracy drops from 93.0\% to 84.0\%.
Despite the heavier communication load, all selected memory uploads remain feasible.
The higher sum rate observed under larger workloads results from admitting fewer UAVs, which reduces multiuser interference among the active links.

\section{Real-World Experiment}\label{section6}

\subsection{Physical Multi-UAV Field Evaluation}

We build a panoramic multi-agent system (PMAS) with three DJI AVATA 360 UAVs flying distinct 200-s routes in the same
outdoor scene. Fig.~\ref{fig:fig12} shows the experimental setup,
UAV trajectories, and representative observations.
Each original 10-Hz video contains 2,000 frames.
Panoramic capture supports spatial intelligence by providing broad
visual coverage as each UAV changes heading. This reduces blind spots and
preserves context for spatial grounding. We use COLMAP-based panoramic
structure from motion (SfM) to register observations and trajectories in a
shared 3D coordinate frame.
Unless otherwise specified, we use \texttt{Qwen3-VL-8B} for image captioning, \texttt{Mxbai-Embed-Large-v1} for text embedding, and \texttt{Milvus} for top-5 memory retrieval. 
\texttt{Qwen3-30B-Instruct} is used to generate evidence-grounded answers.

We conduct 20 random trials. Each trial samples the initial memory and an independent channel realization, then evaluates ten object-presence and ten
spatial-grounding questions. The aerial data are collected in the field, while communication is evaluated through offline channel replay.
The ground server is placed at $(0,35)$ in the COLMAP coordinates. We use $B=5$ MHz, $N=8$, $\sigma^2=-100$ dBm,
$P_{\mathrm{sum}}=100$ mW, and $T=200$ s. Each link involves a 20-dB blockage
loss.

\begin{figure*}[!t]
    \centering
    \includegraphics[width=0.96\textwidth]{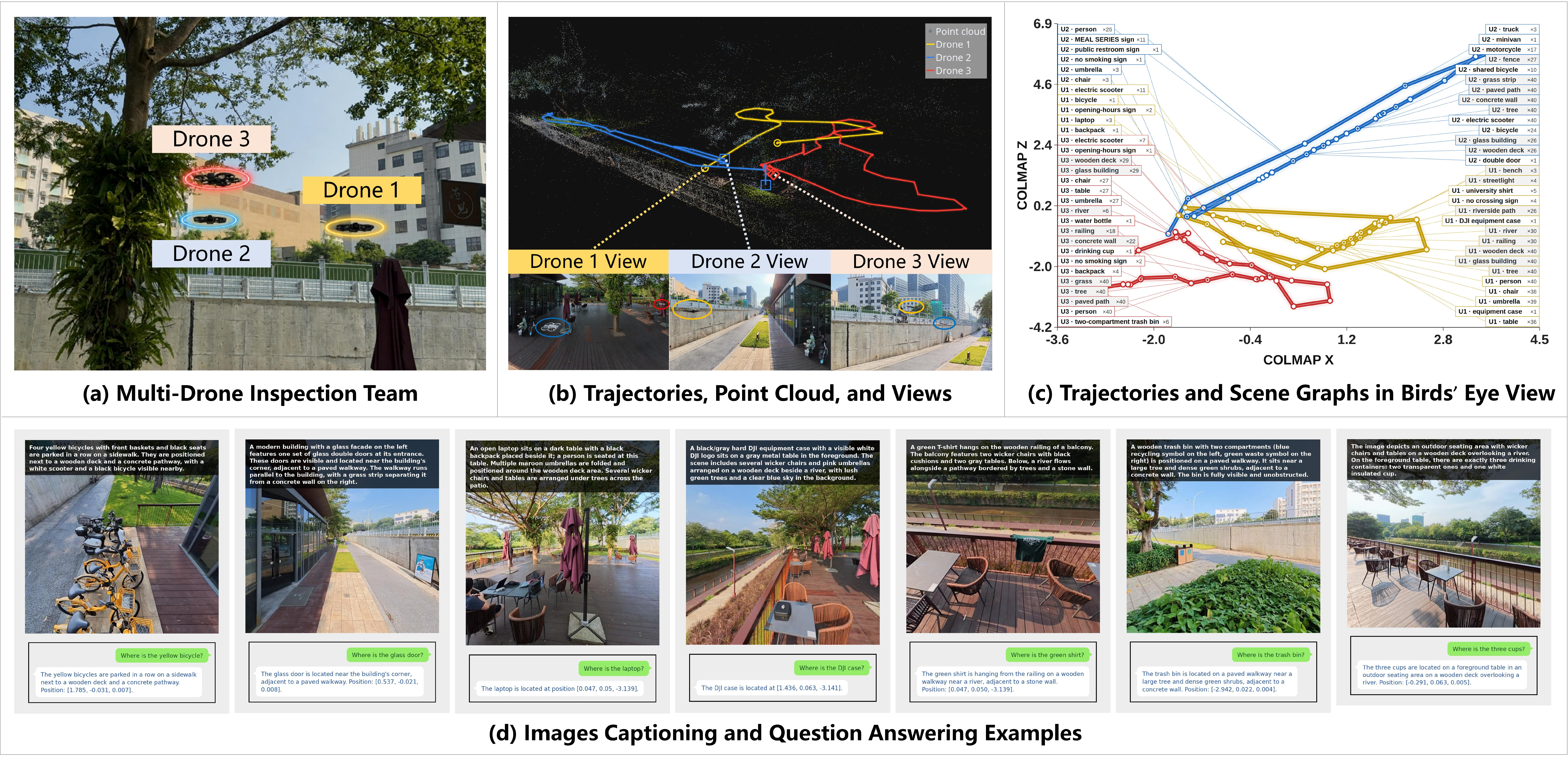}
    \caption{Real-world PMAS benchmark. The figure shows
    simultaneous operation, reconstructed trajectories, representative onboard
    views, semantic memories, and evidence-grounded spatial QA.}
    \label{fig:fig12}
\end{figure*}

MemCen improves the mean QA accuracy by at least 12.0 percentage points over all baselines in Table~\ref{tab:tab7}.
Although MemCen and SenCen request the same average number of uploads, i.e., 1.90, MemCen achieves a 54.7\% higher sum QoM.
This result confirms that the performance gain primarily arises from selecting more valuable memories rather than simply uploading more data.
The relatively lower accuracy on ``Where'' questions further indicates that spatial grounding remains the dominant failure mode.

\begin{figure*}[!t]
    \centering
    \includegraphics[width=0.98\textwidth]{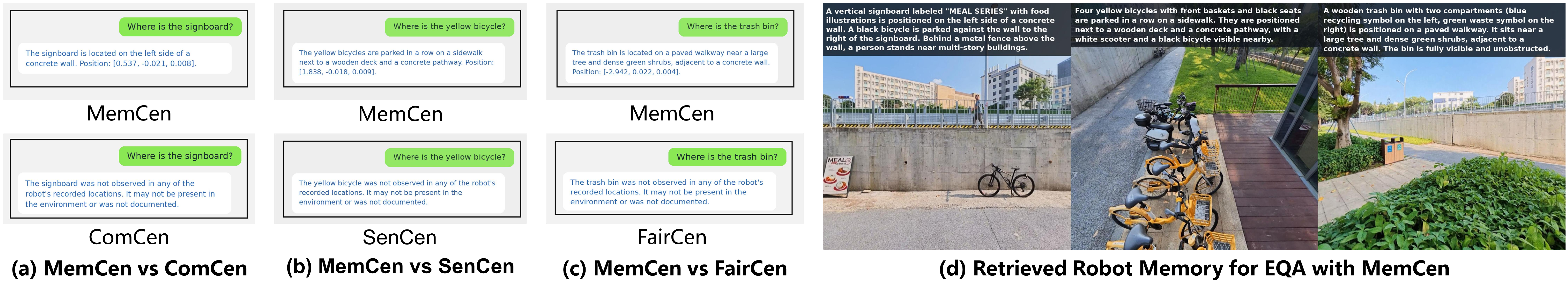}
    \caption{{Case-level comparison of spatially grounded QA
    and the visual evidence retrieved by MemCen.}}
    \label{fig:fig13}
\end{figure*}

\begin{figure*}[!t]
    \centering
    \begin{subfigure}[t]{0.34\linewidth}
        \centering
        \vspace{0pt}
        \includegraphics[width=\linewidth]{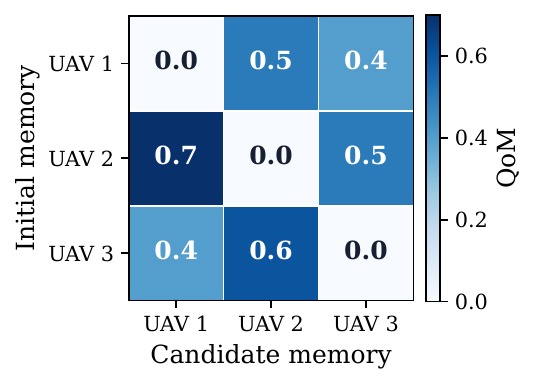}
        \caption{Marginal QoM matrix.}
    \end{subfigure}\hfill
    \begin{subfigure}[t]{0.32\linewidth}
        \centering
        \vspace{0pt}
        \includegraphics[width=\linewidth]{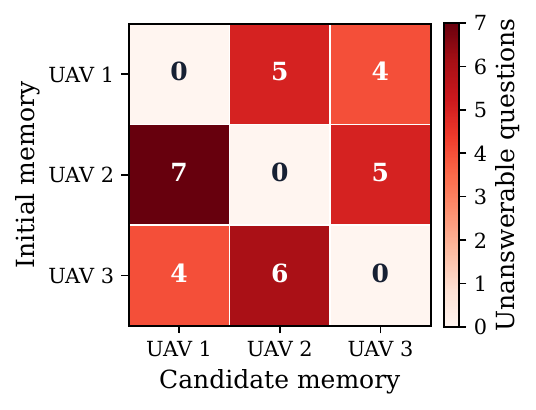}
        \caption{GAE scores.}
    \end{subfigure}\hfill
    \begin{subfigure}[t]{0.33\linewidth}
        \centering
        \vspace{0pt}
        \includegraphics[width=\linewidth]{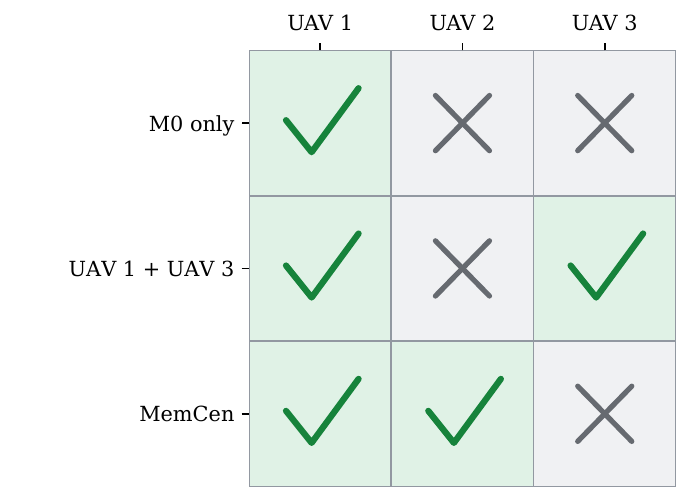}
        \caption{One-upload redundancy ablation.}
    \end{subfigure}
    \caption{QoM and GAE matrices for each initial memory, and a case study with $\mathcal M_0=\mathcal M_1$.}
    \label{fig:fig14}
    \vspace{-0.1in}
\end{figure*}

\begin{table*}[!t]
    \centering
    \caption{Quantitative results over 20 random trials on the PMAS data.}
    \label{tab:tab7}
    \setlength{\tabcolsep}{5.5pt}
    \renewcommand{\arraystretch}{1.08}
    \small
    {
    \begin{tabular}{lcccccc}
        \toprule
        Method & $\uparrow$QA accuracy & $\uparrow$YES/NO & $\uparrow$Where &
        $\uparrow$Sum QoM & Avg. uploads & $\uparrow$Sum rate \\
        \midrule
        Remember & $45.25\!\pm\!4.32\%$ & 50.5\% & 40.0\% & 0.000 & 0.00 & -- \\
        SenCen & $76.50\!\pm\!8.67\%$ & 83.5\% & 69.5\% & 0.640 & \maxval{1.90} & 36.53 Mbps \\
        ComCen & $61.50\!\pm\!12.05\%$ & 71.0\% & 52.0\% & 0.320 & 1.15 & \maxval{\textbf{36.68} Mbps} \\
        FairCen & $62.00\!\pm\!12.19\%$ & 72.5\% & 51.5\% & 0.310 & 1.30 & 36.63 Mbps \\
        \textbf{MemCen} & \maxval{\textbf{$88.50\!\pm\!4.50\%$}} & \maxval{\textbf{98.0\%}} &
        \maxval{\textbf{79.0\%}} & \maxval{\textbf{0.990}} & \maxval{1.90} & 25.65 Mbps \\
        \bottomrule
    \end{tabular}}
    \vspace{-0.2in}
\end{table*}

We next examine a representative trial with $\mathcal{M}_0$ constructed from UAV~3.
As shown in Fig.~\ref{fig:fig13}, the selected memories provide sufficient evidence to answer all three spatial queries.
For the yellow-bicycle query, the estimated position has an error of only 0.054~m, whereas the compared baselines omit the required memory and return an unknown answer.
Fig.~\ref{fig:fig14}(a) shows QoM values of 0.4, 0.6, and 0 for UAVs 1--3, respectively, while Fig.~\ref{fig:fig14}(b) shows the corresponding unanswered-question counts of 4, 6, and 0.
MemCen selects UAVs 1 and 2, yielding an aggregate QoM of 1.0.
This representative trial illustrates the complete MemCen pipeline from GAE-estimated memory utility and physical-layer resource allocation to memory retrieval and downstream QA performance.

To examine whether MemCen can effectively handle redundant observations across different UAV viewpoints, we conduct a controlled case study. 
We set $\mathcal{M}_0$ to the memory of UAV~1 and allow one additional UAV upload.
UAV~3 exhibits 7.7\% trajectory overlap and 41.2\% semantic overlap with UAV~1, resulting in a marginal QoM of 0.40.
In contrast, UAV~2 has lower trajectory and semantic overlaps of 6.4\% and 23.8\%, respectively, and therefore achieves a higher marginal QoM of 0.50.
As shown in Fig.~\ref{fig:fig14}(c), MemCen selects UAV~2.
The resulting QA accuracy reaches 75\%, compared with 70\% for the combination of UAV~1 and UAV~3 and 50\% for $\mathcal{M}_0$ alone.
This case demonstrates that GAE naturally discounts observations that are redundant with respect to different viewpoints, thereby favoring candidate memories that contribute more complementary information.

\begin{figure}[!t]
    \centering
    \includegraphics[width=0.47\textwidth]{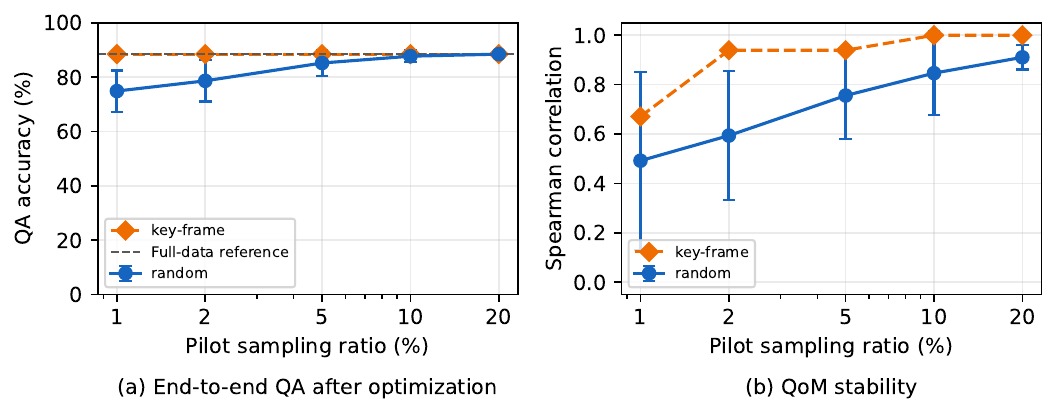}
    \caption{{QoM stability and downstream QA versus the pilot
    sampling ratio. Error bars show the standard deviation of random sampling.
    Dashed lines show the full-data reference.}}
    \label{fig:fig15}
    \vspace{-0.1in}
\end{figure}

We then evaluate pilot ratios $\rho_k\in\{1\%,2\%,5\%,10\%,20\%\}$ using the PMAS data.
A pilot setting is considered reliable if its mean Spearman correlation with the full-data QoM ranking is at least $0.8$ and its mean downstream QA regret is no more than five percentage points.
As shown in Fig.~\ref{fig:fig15}, random sampling satisfies both criteria at a pilot ratio of $10\%$, achieving a Spearman correlation of $0.847$ and a QA accuracy of $87.75\%$.
In contrast, key-frame sampling satisfies both criteria at only $2\%$, with a Spearman correlation of $0.939$ and a QA accuracy of $88.50\%$.
No error bar is reported for key-frame sampling because the selection is deterministic.
Thus, the minimum observed reliable pilot ratios are $10\%$ for random sampling and $2\%$ for key-frame sampling under the evaluated setting.
These thresholds are specific to the tested scene and frame rate.
Accordingly, we adopt $10\%$ random sampling when onboard key-frame extraction is unavailable.

Table~\ref{tab:tab8} evaluates the sensitivity of QoM estimation and memory selection to model choice.
Within each comparison, the input images, questions, and top-5 retrieval setting are kept fixed.
First, increasing the model size generally improves QA accuracy, reflecting the stronger reasoning capability of larger models.
Changing the VLM shifts the mean QoM from 0.450 to 0.567, while preserving the preferred candidate memory in all three initial-memory cases.
With captions fixed, replacing \texttt{Mxbai-Embed-Large-v1} with \texttt{BGE-Large-EN-v1.5} likewise preserves all three selections when using the 4B and 8B answering models.
For the 30B answering model, the embedding change results in one tie and alters one selection.
These results show that the absolute QoM values are model dependent, whereas the resulting memory selections remain stable in most tested settings and are mainly sensitive when candidate memories have similar QoM scores.

\begin{table}[!t]
    \centering
    \caption{Model sensitivity on the real-world data.}
    \label{tab:tab8}
    \small
    \setlength{\tabcolsep}{3pt}
    \renewcommand{\arraystretch}{1.08}
    \begin{tabular*}{\columnwidth}{@{\extracolsep{\fill}}ccccc@{}}
        \toprule
        VLM & LLM & Embedding & QA accuracy & Mean QoM \\
        \midrule
        \multicolumn{5}{c}{Comparison of different VLMs} \\ 
        \multicolumn{5}{c}{(tested on the 20-question benchmark)} \\
        \midrule
        4B  & 30B & Mxbai & 80.00\%  & 0.500 \\
        8B  & 30B & Mxbai & 90.00\%  & 0.567 \\
        32B & 30B & Mxbai & 100.00\% & 0.450 \\
        \midrule
        \multicolumn{5}{c}{Comparison of different LLMs and embeddings} \\
        \multicolumn{5}{c}{(tested on 30 grounded source-memory questions)} \\
        \midrule
        8B & 4B  & Mxbai & 80.00\%  & 0.717 \\
        8B & 4B  & BGE   & 86.67\%  & 0.683 \\
        8B & 8B  & Mxbai & 90.00\%  & 0.650 \\
        8B & 8B  & BGE   & 96.67\%  & 0.617 \\
        8B & 30B & Mxbai & 96.67\%  & 0.517 \\
        8B & 30B & BGE   & 100.00\% & 0.383 \\
        \bottomrule
    \end{tabular*}
    \vspace{-0.1in}
\end{table}

Note that top-5 retrieval results contain a supporting pose for 26 questions, corresponding to a support-hit rate of 86.7\% and a miss rate of 13.3\%.
Despite these retrieval misses, the complete top-5 QA pipeline correctly answers 29 of the 30 questions.
This difference indicates that final answer correctness does not always require retrieval of the annotated supporting pose, as alternative retrieved evidence may still be sufficient.
In our pipeline-conditioned QoM evaluation, retrieval failures are nevertheless treated as part of the end-to-end system behavior.

Lastly, we profile the computational overhead of GAE on a single RTX~5090.
For the physical three-UAV setting with $L_k=5$ and one top-5 retrieval round, a single GAE evaluation requires $18$ local LLM calls and $15$ local retrieval operations.
It processes 20,778 input tokens and 2,257 output tokens, with an average latency of $29.96$\,s.
All models are executed locally, incurring no external API cost.
The model execution reaches a peak GPU memory usage of approximately $23.3$\,GB.
For a larger setting with $K=50$, GAE requires $300$ LLM calls and $250$ local retrieval operations.
Assuming $N_{\mathrm{inf}}=16$ concurrent inference slots, extrapolation from the serial measurements yields an inference latency of approximately $31.2$\,s.

\subsection{Drone-to-Dog Collaboration}

\begin{figure*}[!t]
    \centering
    \begin{subfigure}{0.26\linewidth}
        \centering
        \includegraphics[width=\linewidth]{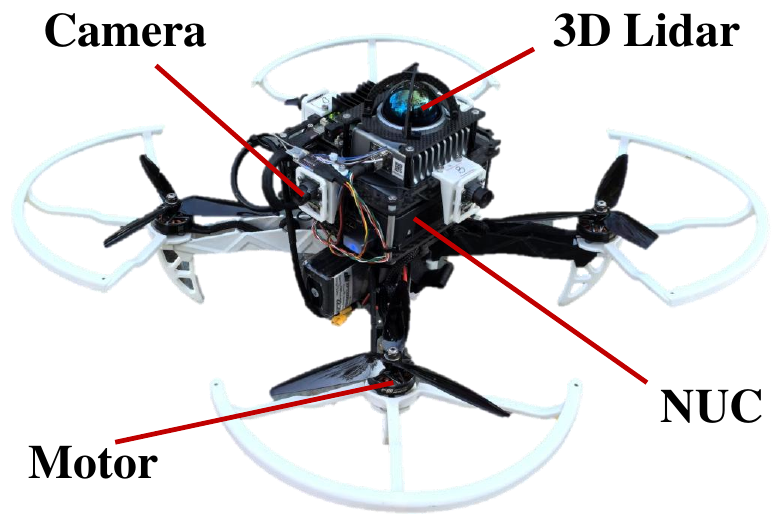}
        \caption{UAV.}
    \end{subfigure}
    \hfill
        \begin{subfigure}{0.18\linewidth}
        \centering
        \includegraphics[width=\linewidth]{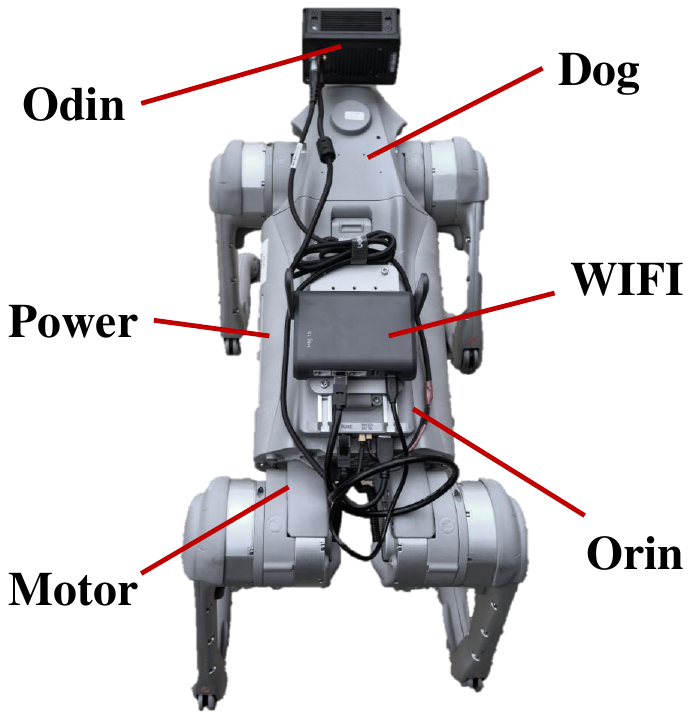}
        \caption{Dog.}
    \end{subfigure} 
    \hfill
        \begin{subfigure}{0.53\linewidth}
        \centering
        \includegraphics[width=\linewidth]{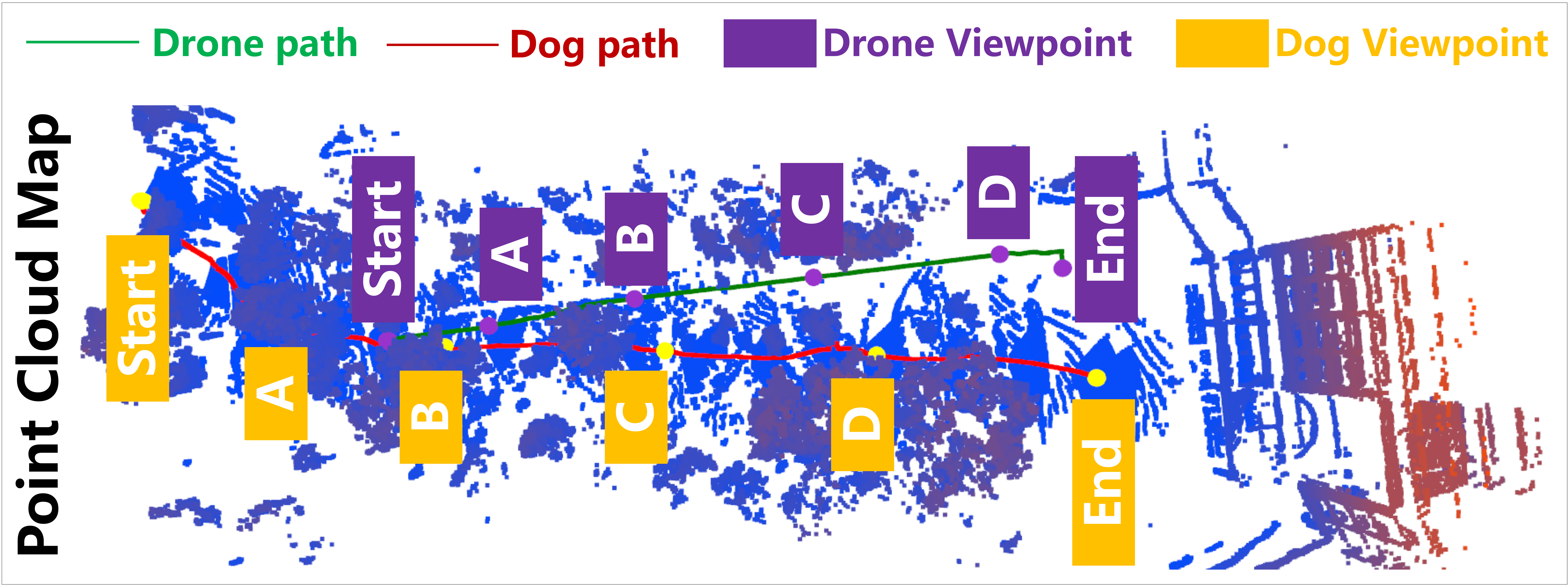}
        \caption{Point cloud map and UAV--ground-robot trajectories.}
    \end{subfigure} 
    \caption{Experimental setup of the real-world UAV-to-ground-robot collaboration task.}
    \label{fig:fig16}
    \vspace{-0.1in}
\end{figure*}

\begin{figure*}[t]
    \centering
    \includegraphics[width=0.95\textwidth]{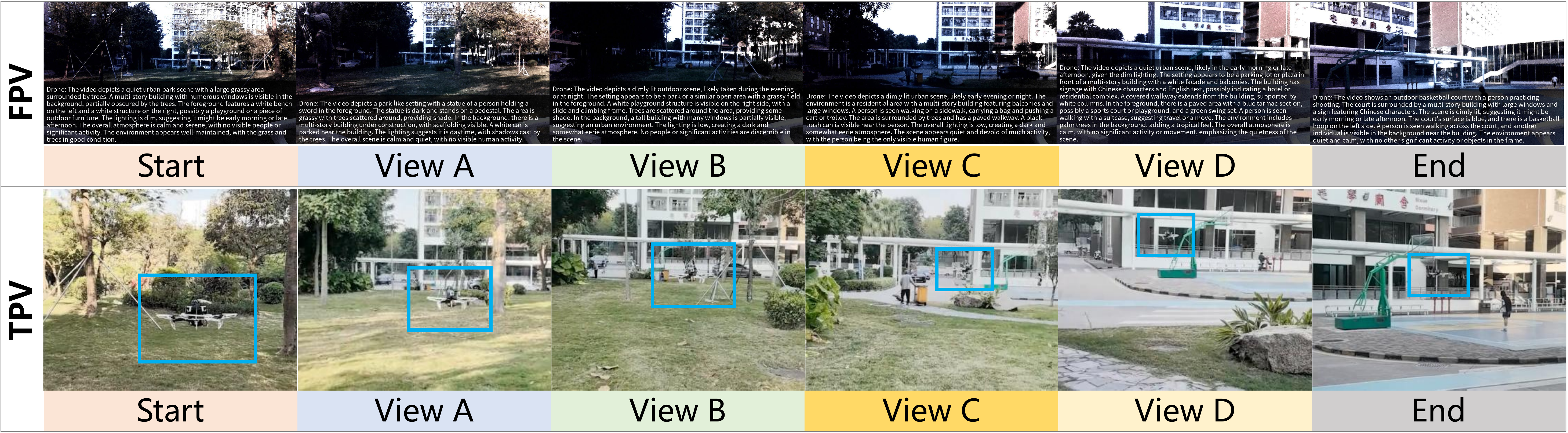}
    \caption{Single-UAV views. Their positions are marked by
    purple boxes in Fig.~\ref{fig:fig16}(c).}
    \label{fig:fig17}
    \vspace{-0.1in}
\end{figure*}

\begin{figure}[!t]
	\centering
	\begin{subfigure}{0.48\linewidth}
		\centering
		\includegraphics[width=1\linewidth]{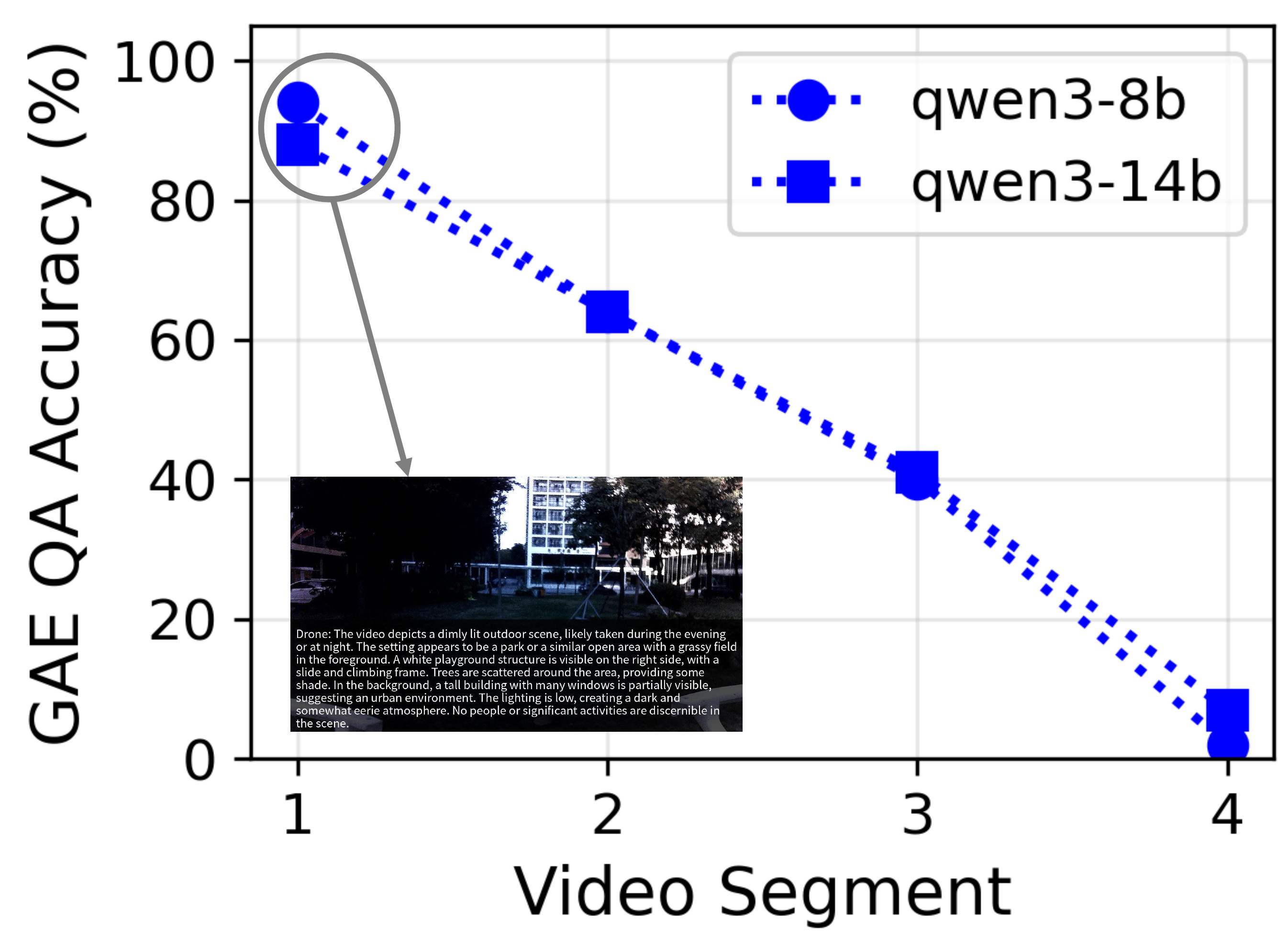}
		\caption{GAE accuracy vs. segment.}
	\end{subfigure}
 	\begin{subfigure}{0.48\linewidth}
		\centering
		\includegraphics[width=1\linewidth]{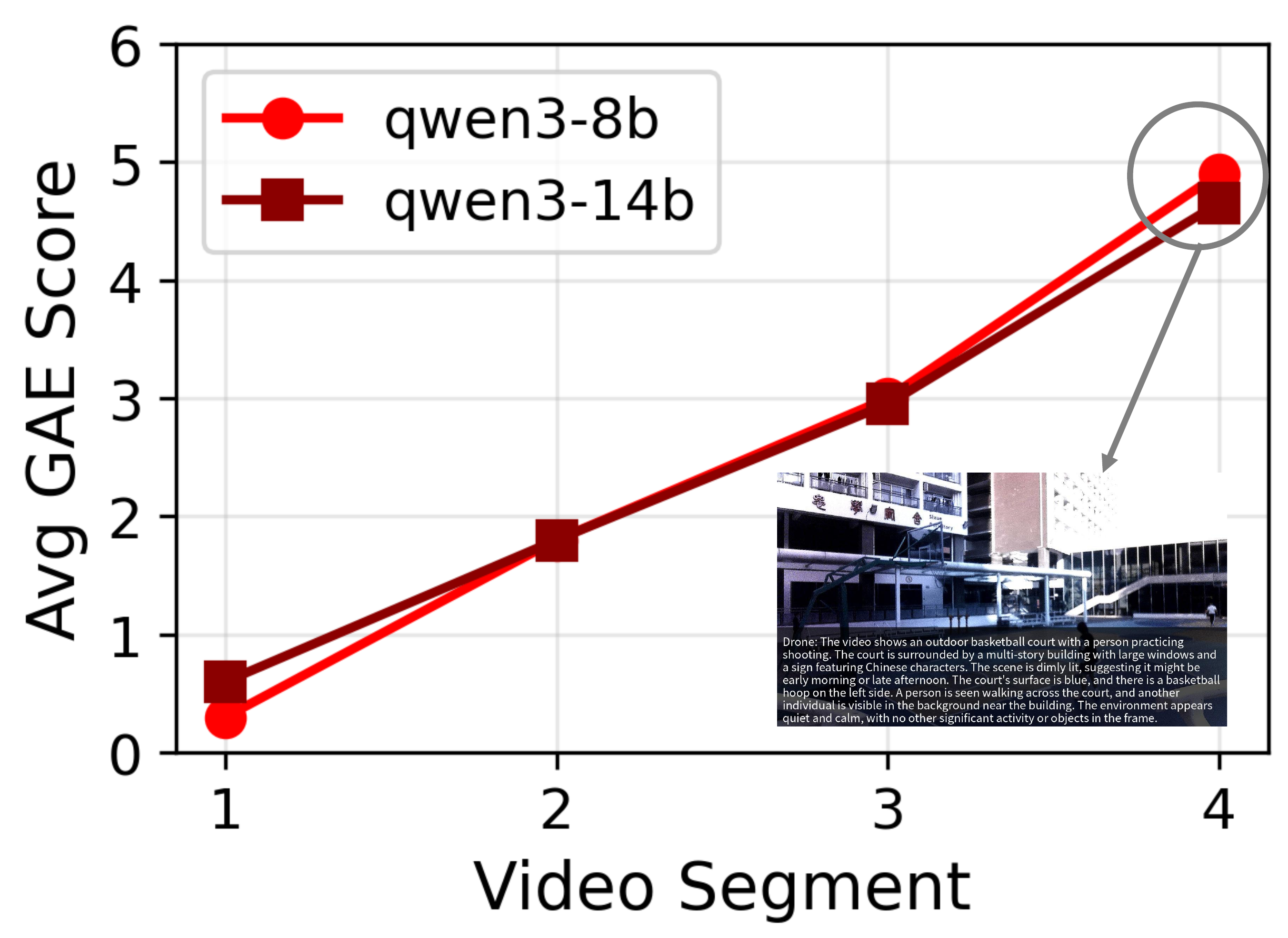}
		\caption{GAE score vs. segment.}
	\end{subfigure}
	\caption{GAE accuracy and score in real-world experiments.}
        \vspace{-0.1in}
	\label{fig:fig18}
\end{figure}

\begin{figure}[t]
    \centering
    \includegraphics[width=0.49\textwidth]{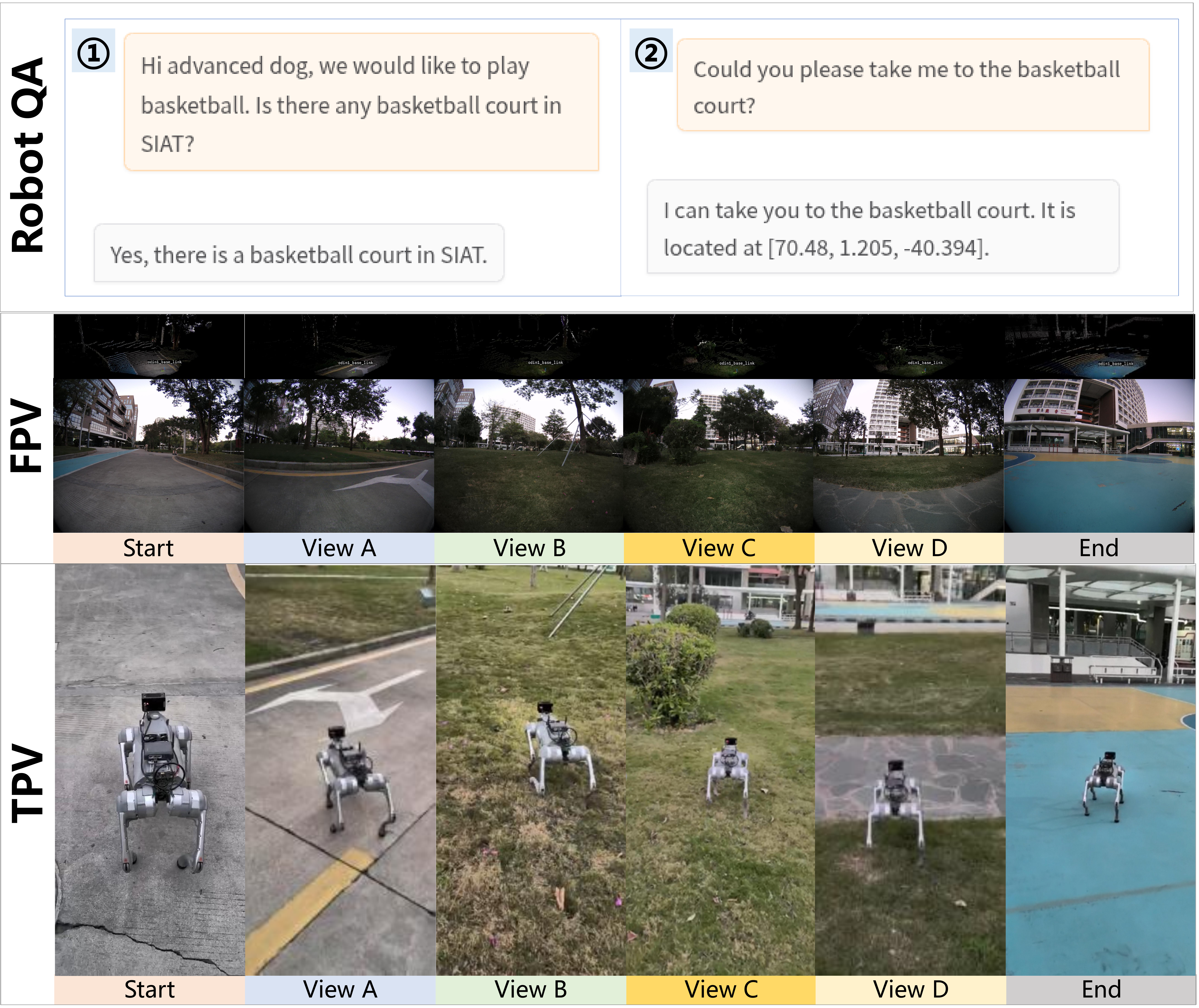}
    \caption{Robot-dog QA and navigation using the UAV memory. Dog positions are marked by yellow boxes in Fig.~\ref{fig:fig16}(c).}
    \label{fig:fig19}
    \vspace{-0.05in}
\end{figure}

Finally, we conduct campus experiments using the quadcopter and Unitree Go2 robotic dog shown in Fig.~\ref{fig:fig16}(a) and  Fig.~\ref{fig:fig16}(b). 
Our drone is equipped with four cameras, a 3D lidar, and an onboard NUC computer. 
\texttt{FAST-LIO2} is used to obtain real-time centimeter-level localization of the drone, with the recorded trajectory shown as a green line in the point cloud map of Fig.~\ref{fig:fig16}(c). 
The front camera collects a total of $1207$ images. 
We divide the entire video into $5$ equal-length segments $[0,1,2,3,4]$. 
{The start and end frames of all segments are shown in Fig.~\ref{fig:fig17}, including both first-person views (FPVs) and third-person views (TPVs).}
The images are captioned using VLM \texttt{Qwen3-VL-8B}. 
As shown in Fig.~\ref{fig:fig17}, the drone recognizes the campus park, identifies a statue of a person holding a sword, and locates a residential building.

We adopt segment $0$ as initial memory $\mathcal{M}_0$. 
The server can collect only one of segments $1$--$4$.
We use the proposed GAE model to measure the memory qualities of segments $1$--$4$. 
Fig.~\ref{fig:fig18}(a) shows high GAE QA accuracy for segment 1. 
This agrees with the Start, A, and B views in Fig.~\ref{fig:fig17}, which show similar scenes in segments 1 and 0. In contrast, the GAE score of segment 4 is the highest as shown in Fig.~\ref{fig:fig18}(b). 
This result demonstrates that GAE identifies fresh memory in real-world data. 
Based on the GAE scores, the LAQA system fetches segment 4 for memory aggregation.

After aggregating the memories, we use natural-language prompts to guide the robotic dog. 
As shown in Fig.~\ref{fig:fig19}, the user wishes to play sports and asks the robot whether there is a basketball court. 
Upon receiving confirmation from the dog, the user can further prompt: ``\texttt{Could you take me to the basketball court?}''
The robot feeds this prompt, together with the drone's memory, into the \texttt{Qwen3-8B} model for agentic inference, to retrieve position information from the recorded views. 
If a position is retrieved, it is transformed from the drone coordinate system to the dog coordinate system and passed to the path planner and locomotion controller to generate actions. 
The task succeeds when the robot is within a predefined distance of the target venue.
Here, the final output of agentic inference is given by: ``
\texttt{I can take you to the basketball court. It is located at [70.48, 1.205, -40.394]}''.
Using the goal position generated by the LLM, the dog follows the path shown in Fig.~\ref{fig:fig19} and successfully arrives at the basketball court.
The complete dog trajectory is shown as a red line in Fig.~\ref{fig:fig16}(c).
This result suggests that LAQA enhances ground embodied navigation performance in complex scenarios.

\section{Conclusion}\label{section7}

This paper introduced a task-driven interface between distributed
UAV memory and wireless resource allocation. GAE estimates the marginal value
of a candidate through the deployed captioning, retrieval, and reasoning
pipeline. MemCen combines that value with heterogeneous payload and channel
constraints. Its QoM-aware capped water-filling law shows analytically how task
value changes physical-layer priority and saturation. PSO and L2M provide
practical optimization and low-latency inference.
Town04, dynamic Town05, and real PMAS results support the utility,
robustness, and real-data feasibility of MemCen. Future work will extend the
method to larger aerial networks and broader platform and task
distributions.

\bibliographystyle{IEEEtran}
\bibliography{ref}

@inproceedings{li2026memory,
  title={Memory centric power allocation for multi-agent embodied question answering},
  author={C. Li and S. Wang and K. Ye and W. Yuan and B. Zhou and Y.-C. Wu and C. Xu and H. Arslan},
  booktitle={Proc. GLOBECOM},
  year={2026}
}

@article{wen2023task,
  title={Task-oriented sensing, computation, and communication integration for multi-device edge {AI}},
  author={Wen, Dingzhu and Liu, Peixi and Zhu, Guangxu and Shi, Yuanming and Xu, Jie and Eldar, Yonina C and Cui, Shuguang},
  journal={IEEE Trans. Wireless Commun.},
  volume={23},
  number={3},
  pages={2486--2502},
  year={2024}
}

@article{shi2023task,
  title={Task-oriented communications for {6G}: Vision, principles, and technologies},
  author={Shi, Yuanming and Zhou, Yong and Wen, Dingzhu and Wu, Youlong and Jiang, Chunxiao and Letaief, Khaled B},
  journal={IEEE Wireless Commun.},
  volume={30},
  number={3},
  pages={78--85},
  year={2023},
  publisher={IEEE}
}

@article{yan2022resource,
  title={Resource allocation for text semantic communications},
  author={Yan, Lei and Qin, Zhijin and Zhang, Rui and Li, Yongzhao and Li, Geoffrey Ye},
  journal={IEEE Wireless Commun. Lett.},
  volume={11},
  number={7},
  pages={1394--1398},
  year={2022}
}

@article{liu2025intelligent,
  title={Intelligent Semantic Communication Scheme Integrating {ISAC} for Low-Altitude Intelligent Networks},
  author={Liu, Shuai and Yang, Helin and Xie, Wancheng and Zheng, Mengting},
  journal={IEEE Trans. Commun.},
  volume={74},
  pages={3018--3033},
  year={2026}
}

@article{tusha2024interference,
  title={Interference burden in wireless communications: A comprehensive survey from {PHY} layer perspective},
  author={Tusha, Armed and Arslan, H{\"u}seyin},
  journal={IEEE Commun. Surv. Tutor.},
  volume={27},
  number={4},
  pages={2204--2246},
  year={2025},
  publisher={IEEE}
}

@article{ye2025integrated,
  title={Integrated sensing and communications for low-altitude economy: A deep reinforcement learning approach},
  author={Ye, Xiaowen and Mao, Yuyi and Yu, Xianghao and Sun, Shu and Fu, Liqun and Xu, Jie},
  journal={IEEE Trans. Wireless Commun.},
  year={2026},
  pages={351--367},
  volume={25}
}

@article{jiang2025integrated,
  title={Integrated sensing and communication for low altitude economy: Opportunities and challenges},
  author={Jiang, Yihang and Li, Xiaoyang and Zhu, Guangxu and Li, Hang and Deng, Jing and Han, Kaifeng and Shen, Chao and Shi, Qingjiang and Zhang, Rui},
  journal={IEEE Commun. Mag.},
  volume={63},
  number={12},
  pages={72--78},
  year={2025}
}

@article{wang2026low,
  author       = {Wang, B. and Kang, H. and Li, J. and Sun, G. and Sun, Z. and Wang, J. and Niyato, D. and Mao, S.},
  title        = {Low-Altitude Satellite-{AAV} Collaborative Joint Mobile Edge Computing and Data Collection via Diffusion-based Deep Reinforcement Learning},
  journal      = {IEEE Trans. Mob. Comput.},
  year         = {2026},
  day          = {16}
}

@article{wang2025llm,
  author  = {Wang, Jiawei and Tian, Yang and Li, Junjie and Sun, Haofeng and Tian, Hui and Zhang, Ping},
  title   = {{LLM}-Empowered Semantic Communication for Multi-Task {3D} Scene Understanding in Low-Altitude Economy Networks},
  journal = {IEEE Trans. Cog. Commun. Netw.},
  year    = {2025},
  volume  = {12},
  pages   = {4896--4910}
}

@article{liu2025goal,
  title={Goal-oriented semantic communication for wireless visual question answering},
  author={Liu, Sige and Li, Nan and Deng, Yansha and Quek, Tony QS},
  journal={IEEE J. Sel. Areas Commun.},
  volume={43},
  number={12},
  pages={4247--4261},
  year={2025}
}

@article{zhang2023multistep,
  author  = {Zhang, Meimei and Chen, Fang and Li, Bin},
  title   = {Multistep Question-Driven Visual Question Answering for Remote Sensing},
  journal = {IEEE Trans. Geosci. Remote Sens.},
  year    = {2023},
  volume  = {61},
  pages   = {1--12}
}

@article{jin2025co,
  title={Co-Design of Sensing, Communications, and Control for Low-Altitude Wireless Networks},
  author={Jin, Haijia and Wu, Jun and Yuan, Weijie and Liu, Fan and Cui, Yuanhao},
  journal={IEEE Trans. Mob. Comput.},
  volume={24},
  number={11},
  pages={12035--12048},
  year={2025},
}

@article{li2024survey,
  title={A survey on deep active learning: Recent advances and new frontiers},
  author={Li, Dongyuan and Wang, Zhen and Chen, Yankai and Jiang, Renhe and Ding, Weiping and Okumura, Manabu},
  journal={IEEE Trans. Neural Netw. Learn. Syst.},
  volume={36},
  number={4},
  pages={5879--5899},
  year={2025},
  publisher={IEEE}
}

@article{cheng2025development,
  title={Development and Application of Coverage Control Algorithms: A Concise Review},
  author={Cheng, Bin and He, Mingyuan and Zhu, Zhongpan and He, Bin and Chen, Jie},
  journal={IEEE Trans. Autom. Sci. Eng.},
  volume={22},
  pages={14906--14927},
  year={2025}
}

@article{sun2016majorization,
  title={Majorization-minimization algorithms in signal processing, communications, and machine learning},
  author={Sun, Ying and Babu, Prabhu and Palomar, Daniel P},
  journal={IEEE Trans. Signal Process.},
  volume={65},
  number={3},
  pages={794--816},
  year={2017}
}

@article{shlezinger2023model,
  title={Model-based deep learning},
  author={Shlezinger, Nir and Whang, Jay and Eldar, Yonina C and Dimakis, Alexandros G},
  journal={Proc. IEEE},
  volume={111},
  number={5},
  pages={465--499},
  year={2023}
}

@article{Liu2024survey,
  title={{A survey of recent advances in optimization methods for wireless communications}},
  author={Liu, Ya-Feng and Chang, Tsung-Hui and Hong, Mingyi and Wu, Zheyu and So, Anthony Man-Cho and Jorswieck, Eduard A. and Yu, Wei},
  journal={IEEE J. Sel. Areas Commun.},
  volume={42},
  number={11},
  pages={2992--3031},
  year={2024}
}

@article{liu2024coverage,
  title={A coverage-aware task allocation method for {UAV}-assisted mobile crowd sensing},
  author={Liu, Xinbin and Wang, Ye and Gao, Hui and Ngai, Edith CH and Zhang, Bo and Wang, Chuhan and Wang, Wendong},
  journal={IEEE Trans. Veh. Technol.},
  volume={73},
  number={7},
  pages={10642--10654},
  year={2024},
  publisher={IEEE}
}

@inproceedings{sermanet2024robovqa,
  title={{RoboVQA}: Multimodal long-horizon reasoning for robotics},
  author={Sermanet, Pierre and Ding, Tianli and Zhao, Jeffrey and Xia, Fei and Dwibedi, Debidatta and Gopalakrishnan, Keerthana and Chan, Christine and Dulac-Arnold, Gabriel and Maddineni, Sharath and Joshi, Nikhil J and others},
  booktitle={Proc. ICRA},
  pages={645--652},
  year={2024}
}

@inproceedings{xu2024mobility,
  title={{Mobility VLA}: Multimodal instruction navigation with long-context {VLMs} and topological graphs},
  author={Xu, Zhuo and Chiang, Hao-Tien Lewis and Fu, Zipeng and Jacob, Mithun George and Zhang, Tingnan and Lee, Tsang-Wei Edward and Yu, Wenhao and Schenck, Connor and Rendleman, David and Shah, Dhruv and others},
  booktitle={Proc. CoRL},
  year={2024}
}

@inproceedings{das2018embodied,
  title={Embodied question answering},
  author={Das, Abhishek and Datta, Samyak and Gkioxari, Georgia and Lee, Stefan and Parikh, Devi and Batra, Dhruv},
  booktitle={Proc. CVPR},
  pages={1--10},
  year={2018}
}

@inproceedings{anwar2025remembr,
  title={{ReMEmbR}: Building and reasoning over long-horizon spatio-temporal memory for robot navigation},
  author={Anwar, Abrar and Welsh, John and Biswas, Joydeep and Pouya, Soha and Chang, Yan},
  booktitle={Proc. ICRA},
  pages={2838--2845},
  year={2025}
}

@inproceedings{carla,
  title={{CARLA}: An open urban driving simulator},
  author={Dosovitskiy, Alexey and Ros, German and Codevilla, Felipe and Lopez, Antonio and Koltun, Vladlen},
  booktitle={CoRL},
  pages={1--16},
  year={2017}
}

@article{zheng2016wireless,
  title={Wireless max--min utility fairness with general monotonic constraints by {Perron--Frobenius} theory},
  author={Zheng, Liang and Hong, Y-W Peter and Tan, Chee Wei and Hsieh, Cheng-Lin and Lee, Chia-Han},
  journal={IEEE Trans. Inf. Theory},
  volume={62},
  number={12},
  pages={7283--7298},
  year={2016}
}

@article{wang2020machine,
  title={Machine intelligence at the edge with learning centric power allocation},
  author={Wang, Shuai and Wu, Yik-Chung and Xia, Minghua and Wang, Rui and Poor, H Vincent},
  journal={IEEE Trans. Wireless Commun.},
  volume={19},
  number={11},
  pages={7293--7308},
  year={2020},
  publisher={IEEE}
}

@article{rinaldi2009new,
  title={New results on the equivalence between zero-one programming and continuous concave programming},
  author={Rinaldi, Francesco},
  journal={Optim. Lett.},
  volume={3},
  pages={377--386},
  year={2009}
}

@article{lucidi2010exact,
  title={Exact Penalty Functions for Nonlinear Integer Programming Problems},
  author={Lucidi, Stefano and Rinaldi, Francesco},
  journal={J. Optim. Theory Appl.},
  volume={145},
  pages={479--488},
  year={2010}
}

@article{wang2020angle,
  title={Angle aware user cooperation for secure massive {MIMO} in {Rician} fading channel},
  author={Wang, Shuai and Wen, Miaowen and Xia, Minghua and Wang, Rui and Hao, Qi and Wu, Yik-Chung},
  journal={IEEE J. Sel. Areas Commun.},
  volume={38},
  number={9},
  pages={2182--2196},
  year={2020}
}

@article{wang2022edge,
author    = {Shuai Wang and Yuncong Hong and Rui Wang and Qi Hao and Yik-Chung Wu and Derrick Wing Kwan Ng},
title     = {Edge Federated Learning via Unit-Modulus Over-The-Air Computation},
journal   = {IEEE Trans. Commun.},
volume    = {70},
number    = {5},
pages     = {3141--3156},
year      = {2022}
}

\end{document}